\theoremstyle{plain}
\newtheorem{theorem}{Theorem}[section]
\newtheorem{proposition}[theorem]{Proposition}
\newtheorem{corollary}[theorem]{Corollary}
\theoremstyle{definition}
\newtheorem{definition}[theorem]{Definition}
\theoremstyle{remark}
\newtheorem{remark}[theorem]{Remark}
\setlist{itemsep=0pt,topsep=0pt,parsep=0pt,partopsep=0pt,leftmargin=0.3cm,wide=0pt}  %
\tikzset{
    -Latex,auto,node distance =1 cm and 1 cm,semithick,
    state/.style ={circle, draw, minimum width = 0.7 cm},
    point/.style = {circle, draw, inner sep=0.04cm,fill,node contents={}},
    bidirected/.style={Latex-Latex,dashed},
    el/.style = {inner sep=2pt, align=left, sloped}
}
\newcommand{\code}[1]{\mbox{\texttt{#1}}}
\let\oldtheta\theta
\renewcommand{\theta}{\ensuremath{\oldtheta}}
\newcommand{\defeq}{\coloneqq}
\newcommand{\grad}{\nabla}
\newcommand{\E}{\mathbb{E}}
\newcommand{\Eb}[2]{\E_{#1}\!\left[#2\right]}
\newcommand{\bX}{\mathbf{X}}
\newcommand{\boldf}{\mathbf{f}}
\newcommand{\bw}{\mathbf{w}}
\newcommand{\boldm}{\mathbf{m}}
\newcommand{\bs}{\mathbf{s}}
\newcommand{\bt}{\mathbf{t}}
\newcommand{\bh}{\mathbf{h}}
\newcommand{\bx}{\mathbf{x}}
\newcommand{\by}{\mathbf{y}}
\newcommand{\bz}{\mathbf{z}}
\newcommand{\bsbase}{\mathbf{s}^{(1)}}
\newcommand{\bsexp}{\mathbf{s}^{(2)}}
\newcommand{\bepsilon}{{\boldsymbol{\epsilon}}}
\newcommand{\bmu}{{\boldsymbol{\mu}}}
\newcommand{\Opt}{\mathcal{O}}
\newcommand{\mcN}{\mathcal{N}}
\newcommand{\Reals}{\mathbb{R}}
\newcommand{\Prob}{\mathbb{P}}
\DeclareMathOperator*{\argmin}{argmin}
\renewcommand{\paragraph}[1]{\noindent{\bf #1}\;}
\title{Extracting Reward Functions from Diffusion Models}
\author{%
  Felipe Nuti\thanks{Equal Contribution.} \hspace{10pt} Tim Franzmeyer$^*$ \hspace{5pt} João F. Henriques \\
  \texttt{\{nuti, frtim, joao\}@robots.ox.ac.uk} \\
   \\
  University of Oxford \\
}
\tikzstyle{arrow} = [thick,->,>=stealth]
\tikzstyle{grayarrow} = [thick,->,>=stealth, gray]
\newcommand\mydots{\hbox to 0.5em{.\hss.}}
\begin{document}

\maketitle
\setcounter{footnote}{0}
\begin{abstract}
Diffusion models have achieved remarkable results in image generation, and have similarly been used to learn high-performing policies in sequential decision-making tasks. 
Decision-making diffusion models can be trained on lower-quality data, and then be steered with a reward function to generate near-optimal trajectories.
We consider the problem of extracting a reward function by comparing a decision-making diffusion model that models low-reward behavior and one that models high-reward behavior; a setting related to inverse reinforcement learning. 
We first define the notion of a \emph{relative reward function of two diffusion models} and show conditions under which it exists and is unique. 
We then devise a practical learning algorithm for extracting it by aligning the gradients of a reward function -- parametrized by a neural network -- to the difference in outputs of both diffusion models.
Our method finds correct reward functions in navigation environments, and we demonstrate that steering the base model with the learned reward functions results in significantly increased performance in standard locomotion benchmarks.
Finally, we demonstrate that our approach generalizes beyond sequential decision-making by learning a reward-like function from two large-scale image generation diffusion models. The extracted reward function successfully assigns lower rewards to harmful images.\footnote{Video and Code at \url{https://www.robots.ox.ac.uk/~vgg/research/reward-diffusion/}}
\end{abstract}

\section{Introduction}
Recent work~\citep{diffuser, ajay2022conditional} demonstrates that diffusion models -- which display remarkable performance in image generation -- are similarly applicable to sequential decision-making. Leveraging a well-established framing of reinforcement learning as conditional sampling~\citep{control_as_inference}, \citet{diffuser} show that diffusion models can be used to parameterize a reward-agnostic prior distribution over trajectories, learned from offline demonstrations alone. 
Using classifier guidance~\citep{sohl2015deep}, the diffusion model can then be steered with a (cumulative) reward function, parametrized by a neural network, to generate near-optimal behaviors in various sequential decision-making tasks. 
Hence, it is possible to learn successful policies both through (a) training an {\it expert diffusion model} on a distribution of optimal trajectories, and (b) by training a {\it base diffusion model} on lower-quality or reward-agnostic trajectories and then steering it with the given reward function. This suggests that the reward function can be extracted by comparing the distributions produced by such base and expert diffusion models.

The problem of learning the preferences of an agent from observed behavior, often expressed as a reward function, is considered in the Inverse Reinforcement Learning (IRL) literature~\citep{russell1998learning,ng2000algorithms}. In contrast to merely imitating the observed behavior, learning the reward function behind it allows for better robustness, better generalization to distinct environments, and for combining rewards extracted from multiple behaviors. 

Prior work in reward learning is largely based on the Maximum Entropy Inverse Reinforcement Learning (MaxEntIRL) framework~\citep{ziebart2008maximum}, which relies on alternating between policy optimization and reward learning. This often comes with the assumption of access to the environment (or a simulator) to train the policy.
Reward learning is also of independent interest outside of policy learning, for understanding agents' behavior or predicting their actions, with applications in Value Alignment, interpretability, and AI Safety \citep{amodei2016concrete, Gabriel2020ArtificialAlignment, russel30human}. For example, reward functions can also be utilized to better understand an existing AI system's explicit or implicit "preferences" and tendencies \citep{Russell2019HumanControl}. 

\begin{figure}
    \centering
    \includegraphics[width=\linewidth]{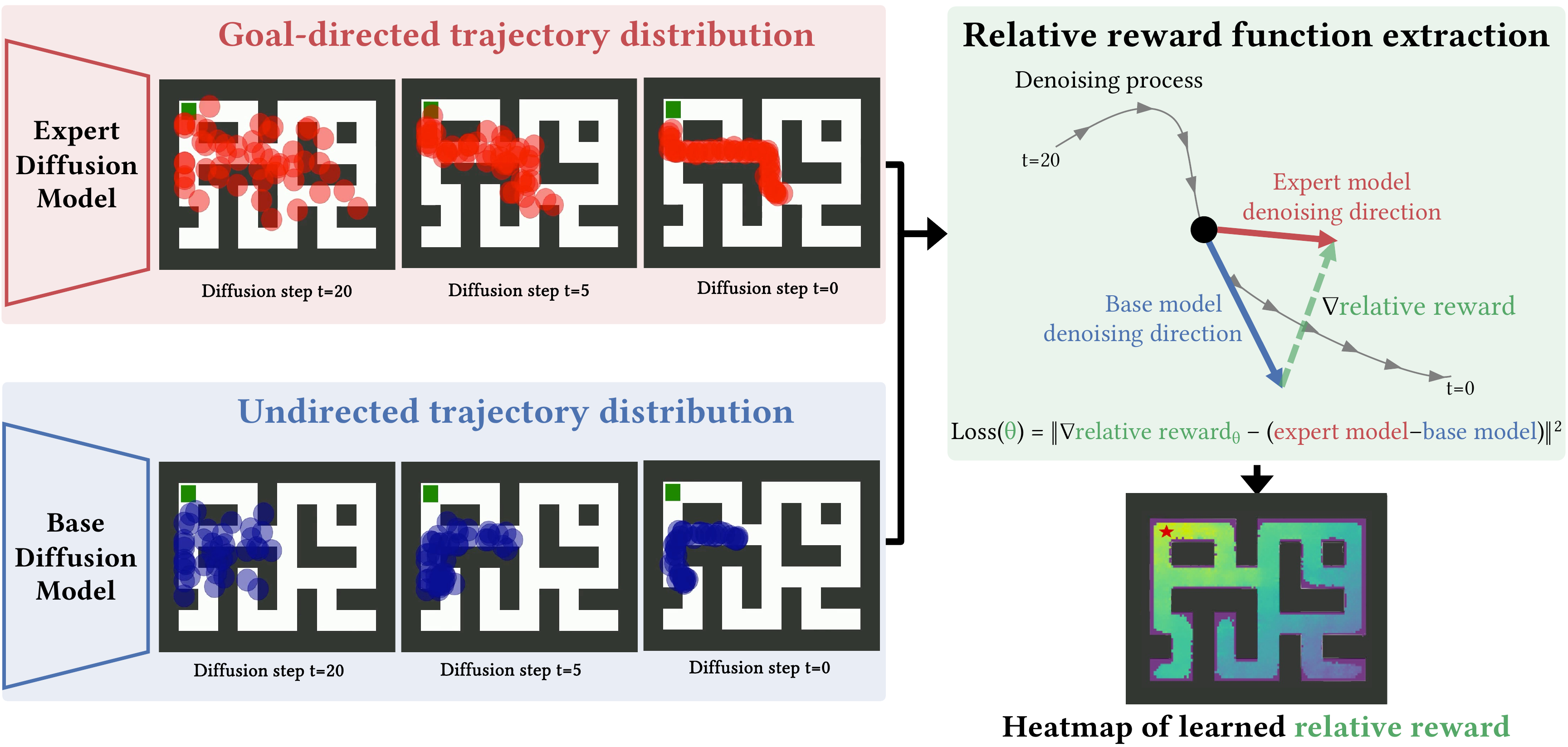}
    \caption{We see 2D environments with black walls, in which an agent has to move through the maze to reach the goal in the top left corner (green box). 
    The red shaded box shows the progression from an initial noised distribution over states (at diffusion timestep $t=20$, left) to a denoised high-reward expert trajectory on the right. This distribution is modeled by an \emph{expert diffusion model}.
    The blue shaded box depicts the same process but for a low-reward trajectory where the agent moves in the wrong direction. This distribution is modeled by a \emph{base diffusion model}.
    Our method (green shaded box) trains a neural network to have its gradient aligned to the difference in outputs of these two diffusion models throughout the denoising process. As we argue in Section \ref{methods}, this allows us to extract a relative reward function of the two models.
    We observe that the heatmap of the learned relative reward (bottom right) assigns high rewards to trajectories that reach the goal point (red star).}
    \label{fig:teaser}
\end{figure}

In this work, we introduce a method for extracting a relative reward function from two decision-making diffusion models, illustrated in Figure~\ref{fig:teaser}.
Our method does not require environment access, simulators, or iterative policy optimization.
Further, it is agnostic to the architecture of the diffusion models used, applying to continuous and discrete models, and making no assumption of whether the models are unguided, or whether they use either classifier guidance~\citep{sohl2015deep} or classifier-free guidance~\citep{ho2021classifier}.

We first derive a notion of a \emph{relative reward function of two diffusion models}. 
We show that, under mild assumptions on the trajectory distribution and diffusion sampling process, our notion of reward exists and is unique, up to an additive constant. 
Further, we show that the derived reward is equivalent to the true reward under the probabilistic RL framework~\citep{control_as_inference}. %
Finally, we propose a practical learning algorithm for extracting the relative reward function of two diffusion models by aligning the gradients of the learned reward function with the {\it differences of the outputs} of the base and expert models. 
As the extracted reward function itself is a feed-forward neural network, i.e. not a diffusion model, it is computationally lightweight.

Our proposed method for extracting a relative reward function of two diffusion models could be applicable to several scenarios. For example, it allows for better interpretation of behavior differences, for composition and manipulation of reward functions, and for training agents from scratch or fine-tune existing policies.
Further, relative reward functions could allow to better understand diffusion models by contrasting them. For example, the biases of large models trained on different datasets are not always obvious, and our method may aid interpretability and auditing of models by revealing the differences between the outputs they are producing.

We empirically evaluate our reward learning method along three axes. 
In the Maze2D environments \citep{fu2020d4rl}, we learn a reward function by comparing a base diffusion model trained on exploratory trajectories and an expert diffusion model trained on goal-directed trajectories, as illustrated in Figure~\ref{fig:teaser}. We can see in Figure~\ref{fig:maze2d heatmaps} that our method learns the correct reward function for varying maze configurations.
In the common locomotion environments Hopper, HalfCheetah, and Walker2D \citep{fu2020d4rl, brockman2016openai}, we learn a reward function by comparing a low-performance base model to an expert diffusion model and demonstrate that steering the base model with the learned reward function results in a significantly improved performance.
Beyond sequential-decision making, we learn a reward-like function by comparing a base image generation diffusion model (Stable Diffusion, \citep{rombach2022high}) to a {\it safer} version of Stable Diffusion~\citep{schramowski2022safe}. Figure~\ref{fig:stable_diff_dist} shows that the learned reward function penalizes images with harmful content, such as violence and hate, while rewarding harmless images.

In summary, our work makes the following contributions:
\begin{itemize}
     \item We introduce the concept of \emph{relative reward functions} of diffusion models, and provide a mathematical analysis of their relation to rewards in sequential decision-making.%
     \item We propose a practical learning algorithm for extracting relative reward functions by aligning the reward function's gradient with the difference in outputs of two diffusion models.
     \item We empirically validate our method in long-horizon planning environments, in high-dimensional control environments, and show generalization beyond sequential decision-making.
\end{itemize}

\section{Related Work}

\emph{Diffusion models}, originally proposed by \citet{sohl2015deep}, are an expressive class of generative models that generate samples by learning to invert the process of noising the data. The work of \citet{ho2020denoising} led to a resurgence of interest in the method, followed by \citep{nichol2021improved, dhariwal2021diffusion, song2021denoising}. \citet{SongErmon2021} proposed a unified treatment of diffusion models and score-based generative models \citep{song2019generative, song2020improved} through stochastic differential equations \citep{oksendal2003stochastic}, used e.g. in \citep{karraselucidating}. Diffusion models have shown excellent performance in image generation \citep{rombach2022high, dhariwal2021diffusion, schramowski2022safe, ramesh2022hierarchical}, molecule generation \citep{hoogeboom2022equivariant, jingtorsional, jing2023eigenfold}, 3D generation \citep{poole2023dreamfusion, melas2023realfusion}, video generation \citep{hoimagen}, language \citep{li2022diffusion}, and, crucially for this work, sequential decision-making~\citep{diffuser}. Part of their appeal as generative models is due to their \emph{steerability}. Since sample generation is gradual, other pre-trained neural networks can be used to steer the diffusion model during sampling (i.e. classifier guidance, Section \ref{seq: classifier guidance}). An alternative approach is classifier-free guidance \citep{ho2021classifier}, often used in text-to-image models such as \citep{rombach2022high}.

Decision-making with diffusion models have first been proposed by~\citet{diffuser}, who presented a hybrid solution for planning and dynamics modeling by iteratively denoising a sequence of states and actions at every time step, and executing the first denoised action. This method builds on the probabilistic inference framework for RL, reviewed by~\citet{control_as_inference}, based on works including \citep{todorov2008general, ziebart2010modeling, ziebart2008maximum, kappen2012optimal, kappen2011optimal}.
Diffusion models have been applied in the standard reinforcement learning setting~\citep{liang2023adaptdiffuser,ajay2022conditional,chi2023diffusion}, and also in 3D domains~\citep{huang2023diffusion}. 
They have similarly been used in imitation learning for generating more human-like policies~\citep{pearce2023imitating,reuss2023goal}, for traffic simulation~\citep{zhong2022guided}, and for offline reinforcement learning~\citep{diffuser,wang2022diffusion,hansen2023idql}. 
Within sequential-decision making, much interest also lies in extracting reward functions from observed behavior. This problem of inverse RL~\cite{Abbeel2004ApprenticeshipLearning,arora2021survey} has mostly either been approached by making strong assumptions about the structure of the reward function~\citep{ziebart2008maximum,ng2000algorithms,hadfield2016cooperative}, or, more recently, by employing adversarial training objectives~\citep{fu2017learning}. 

The optimization objective of our method resembles that of physics-informed neural networks~\citep{lagaris1998artificial, han2018solving, magill2018neural, raissi2018deep, raissi2019physics, fang2020neural}, which also align neural network gradients to a pre-specified function. In our case, this function is a difference in the outputs of two diffusion models.

To the best of our knowledge, no previous reward learning methods are directly applicable to diffusion models, nor has extracting a relative reward function from two diffusion models been explored before.

\section{Background}\label{sec:background}

Our method leverages mathematical properties of diffusion-based planners to extract reward functions from them. To put our contributions into context and define notation, we give a brief overview of the probabilistic formulation of Reinforcement Learning presented in \citep{control_as_inference}, then of diffusion models, and finally of how they come together in decision-making diffusion models~\citep{diffuser}.

\paragraph{Reinforcement Learning as Probabilistic Inference.} \label{sec: cif}
\citet{control_as_inference} provides an in-depth exposition of existing methods for approaching sequential decision-making from a probabilistic and causal angle using Probabilistic Graphical Models (PGM) \citep{10.2307/4144379}. We review some essential notions to understand this perspective on RL. 
Denote by $\Delta_{\mathcal{X}}$ the set of probability distributions over a set $\mathcal{X}$.

\paragraph{Markov Decision Process (MDP). } An MDP is a tuple $\langle \mathcal{S}, \mathcal{A}, \mathcal{T}, r, \rho_0, T\rangle$ consisting of a state space $\mathcal{S}$ , an action space $\mathcal{A}$, a transition function $\mathcal{T} : \mathcal{S} \times \mathcal{A} \rightarrow \Delta_{\mathcal{S}}$, a reward function $r : \mathcal{S} \times \mathcal{A} \rightarrow \Delta_{\mathbb{R}_{\geq 0}}$, an initial state distribution $\rho_0 \in \Delta_{\mathcal{S}}$, and an episode length $T$.
The MDP starts at an initial state $s_0 \sim \rho_0$, and evolves by sampling $a_t \sim \pi(s_t)$, and then $s_{t+1} \sim \mathcal{T}(s_t, a_t)$ for $t \geq 0$. The reward received at time $t$ is $r_t \sim r(s_t, a_t)$. The episode ends at $t=T$. The sequence of state-action pairs $\tau= ((s_t, a_t))_{t = 0}^{T}$ is called a \emph{trajectory}.

The framework in \citep{control_as_inference} recasts such an MDP as a PGM, illustrated in Appendix \ref{sec: cif apdx}, Figure \ref{fig:full scm}. It consists of a sequence of states $(s_t)_{t = 0}^T$, actions $(a_t)_{t = 0}^T$ and \emph{optimality variables} $(\Opt_t)_{t = 0}^T$. The reward function $r$ is not explicitly present. Instead, it is encoded in the optimality variables via the relation: $\Opt_t \sim \textrm{Ber}(e^{-r(s_t, a_t)})$. 
One can apply Bayes's Rule to obtain: 
\begin{equation} \label{eq:cif fact}
    p(\tau | \Opt_{1:T}) \propto p(\tau) \cdot p(\Opt_{1:T} | \tau)
\end{equation}
which factorizes the distribution of optimal trajectories (up to a normalizing constant) as a \emph{prior} $p(\tau)$ over trajectories and a \emph{likelihood term} $p(\Opt_{1:T} | \tau)$. From the definition of $\Opt_{1:T}$ and the PGM structure, we have $p(\Opt_{1:T} | \tau) = e^{- \sum_t r(s_t, a_t)}$. Hence, the negative log-likelihood of optimality conditioned on a trajectory $\tau$
corresponds to its cumulative reward:
$
    -\textrm{log} \ p(\Opt_{1:T} | \tau) = \sum_t r(s_t, a_t)
$.

\paragraph{Diffusion Models in Continuous Time.} \label{seq:continuous diffusion} \label{seq: classifier guidance}
At a high level, diffusion models work by adding noise to data $\bx \in \Reals^n$ (\emph{forward process}), and then learning to denoise it (\emph{backward process}). 

The forward noising process in continuous time follows the Stochastic Differential Equation (SDE):
\begin{align} \label{eq:forward sde}
    d\bx_t = \boldf(\bx_t, t)dt + g(t)d\bw_t
\end{align} 
where $f$ is a function that is Lipschitz, $\bw$ is a standard Brownian Motion \citep{le2016brownian} and $g$ is a (continuous) noise schedule, which regulates the amount of noise added to the data during the forward process (c.f. \ref{sec apdx: discrete diffusion}). 
\citet{SongErmon2021} then use a result of \citet{ANDERSON1982313} to write the SDE satisfied by the \emph{reverse process} of (\ref{eq:forward sde}), denoted $\bar{\bx}_t$, as:
\begin{align} \label{eq:backward sde}
    d\bar{\bx}_t = [\boldf(\bar{\bx}_t, t) - g(t)^2 \nabla_{\bx} \textrm{log} \ p_t(\bar{\bx}_t)] \ dt + g(t) \ d\bar{\bw}_t
\end{align} 
Here $p_t(\bx)$ denotes the marginal density function of the forward process $\bx_t$, and $\bar{\bw}$ is a reverse Brownian motion (see \citep{ANDERSON1982313}). The diffusion model is a neural network $\bs_\Theta(\bx_t, t)$ with parameters $\Theta$ that is trained to approximate  $ \nabla_{\bx} \textrm{log} \ p_t(\bx_t)$, called the \emph{score} of the distribution of $\bx_t$. The network $\bs_\Theta(\bx_t, t)$ can then be used to generate new samples from $p_0$ by taking $\bar{\bx}_T \sim \mcN(0, I)$ for some $T > 0$, and simulating (\ref{eq:backward sde}) \emph{backwards in time} to arrive at $\bar{\bx}_0 \approx \bx_0$, with the the neural network $\bs_\Theta$ in place of the score term: 
\begin{align} \label{eq:backward sde approx}
    d\bar{\bx}_t = [\boldf(\bar{\bx}_t, t) - g(t)^2 \bs_\Theta(\bar{\bx}_t, t)] \ dt + g(t) \ d\bar{\bw}_t
\end{align}

This formulation is essential for deriving existence results for the relative reward function of two diffusion models in Section \ref{methods}, as it allows for conditional sampling. For example, to sample from $p(\bx_0 | y) \propto p(\bx_0) \cdot p(y | \bx_0)$, where $y \in \{0, 1\}$, the sampling procedure can be modified to use $\nabla_{\bx} \textrm{log} \ p(\bx_t | y) \approx \bs_\Theta(\bx_t, t) + \nabla_{\bx}\rho(\bx, t)$ instead of $\bs_\Theta(\bar{\bx}_t, t)$. Here, $\rho(\bx, t)$ is a neural network approximating $\textrm{log} \ p(y | \bx_t)$. The gradients of $\rho$ are multiplied by a small constant $\omega$, called the \emph{guidance scale}. The resulting \emph{guided} reverse SDE is as follows:
\begin{align} \label{eq:guided sde}
    d\bar{\bx}_t = [\boldf(\bar{\bx}_t, t) - g(t)^2 [\bs_\Theta(\bar{\bx}_t, t) + \omega \nabla_{\bx} \rho(\bar{\bx}_t, t)]] \ dt + g(t) \ d\bar{\bw}_t
\end{align}
Informally, this method, often called \emph{classifier guidance} \citep{sohl2015deep}, allows for steering a diffusion model to produce samples $\bx$ with some property $y$ by gradually pushing the samples in the direction that maximizes the output of a classifier predicting $p(y | \bx)$.

\paragraph{Planning with Diffusion.}
The above shows how sequential decision-making can be framed as sampling from a posterior distribution $p(\tau | \Opt_{1:T})$ over trajectories. Section \ref{seq: classifier guidance} shows how a diffusion model $p(\bx_0)$ can be combined with a classifier to sample from a posterior $p(\bx_0 | y)$.
These two observations point us to the approach in Diffuser \citep{diffuser}: using a diffusion model to model a prior $p(\tau)$ over trajectories, and a reward prediction function $\rho(\bx, t) \approx p(\Opt_{1:T} | \tau_t)$ to steer the diffusion model. This allows approximate sampling from $p(\tau | \Opt_{1:T}$), which produces (near-)optimal trajectories.

The policy generated by the Diffuser denoises a fixed-length sequence of future states and actions and executes the first action of the sequence.
Diffuser can also be employed for goal-directed planning, by fixing initial and goal states during the denoising process.

\section{Methods} \label{methods}

In the previous section, we established the connection between the (cumulative) return $\sum_t r(s_t, a_t)$ of a trajectory $\tau = ((s_t, a_t))_{t = 1}^T$ and the value $p(y | \bx)$ used for classifier guidance (Section \ref{seq: classifier guidance}), with $\bx$ corresponding to $\tau$ and $y$ corresponding to $\mathcal{O}_{1:T}$. Hence, we can look at our goal as finding $p(y | \bx)$, or equivalently $p(\Opt_{1:T} | \tau)$, in order to recover the cumulative reward for a given trajectory $\tau$. Additionally, in Section \ref{sec:numerical method}, we will demonstrate how the single-step reward can be computed by choosing a specific parametrization for $p(y | \bx)$.

\paragraph{Problem Setting.} \label{sec:problem setting}
We consider a scenario where we have two decision-making diffusion models: a base model $\bsbase_\phi$ that generates reward-agnostic trajectories, and an expert model $\bsexp_\Theta$ that generates trajectories optimal under some \emph{unknown} reward function $r$. Our objective is to learn a reward function $\rho(\bx, t)$ such that, if $\rho$ is used to steer the base model $\bsbase_\phi$ through classifier guidance, we obtain a distribution close to that of the expert model $\bsexp_\Theta$. From the above discussion, such a function $\rho$ would correspond to the notion of relative reward in the probabilistic RL setting.

In the following, we present theory showing that:
\begin{enumerate}
    \item[\textbf{1.}] In an idealized setting where $\bsbase$ and $\bsexp$ have no approximation error (and are thus conservative vector fields), there exists a unique function $\rho$ that exactly converts $\bsbase$ to $\bsexp$ through classifier guidance.
    \item[\textbf{2.}] In practice, we cannot expect such a classifier to exist, as approximation errors might result in diffusion models corresponding to non-conservative vector fields.
    \item[\textbf{3.}] However, the functions $\rho$ that \emph{best approximate} the desired property (to arbitrary precision $\varepsilon$) do exist. These are given by Def. \ref{thm:def rrf} and can be obtained through a \emph{projection} using an $L^2$ distance.
    \item[\textbf{4.}] The use of an $L^2$ distance naturally results in an $L^2$ loss for learning $\rho$ with Gradient Descent.
\end{enumerate}

\subsection{A Result on Existence and Uniqueness}

We now provide a result saying that, once $\bsbase_\phi$ and $\bsexp_\Theta$ are fixed, there is a condition on the gradients of $\rho$ that, if met, would allow us to match not only the \emph{distributions} of $\bsbase_\phi$ and $\bsexp_\Theta$, but also their entire denoising processes, with probability 1 (i.e. \emph{almost surely}, a.s.). 

In the following theorem, $\bh$ plays the role of the gradients $\grad_\bx \rho(\bx_t, t)$. Going from time $t = 0$ to $t = T$ in the theorem corresponds to solving the backward SDE (\ref{eq:backward sde}) from $t = T$ to $t = 0$, and $\boldf$ in the theorem corresponds to the drift term (i.e. coefficient of $dt$) of (\ref{eq:backward sde}). For proof, see Appendix \ref{sec apdx: sde theorems}.

\begin{theorem}[Existence and Uniqueness] \label{thm:existence and uniqueness}

    Let $T > 0$. Let $\boldf^{(1)}$ and $\boldf^{(2)}$ be functions from $\Reals^n \times [0, T]$ to $\Reals^n$ that are Lipschitz, and $g: [0, T] \rightarrow \Reals_{\geq 0}$ be bounded and continuous with $g(0) > 0$. Fix a probability space $(\Omega, \mathcal{F}, \Prob)$, and a standard $\Reals^n$-Brownian Motion $(\bw_t)_{t \geq 0}$.
    
    Consider the Itô SDEs:
    \begin{alignat}{2}
        \label{eq:thm base sde}
        \text{ }&  d\bx^{(1)}_t &&= \boldf^{(1)}(\bx^{(1)}_t, t) \ dt + g(t) \ d\bw_t  \\
        \label{eq:thm expert sde}
        \text{ } & d\bx^{(2)}_t &&= \boldf^{(2)}(\bx^{(2)}_t, t) \ dt + g(t) \ d\bw_t \\
        \label{eq:thm guided sde}
        \text{ } & d\bx_t &&= [\boldf^{(1)}(\bx_t, t) + \bh(\bx_t, t)] \ dt + g(t) \ d\bw_t, \text{where } \bh \text{ is Lipschitz}
    \end{alignat}
    and fix an initial condition $\bx^{(1)}_0 = \bx^{(2)}_0 =  \bx_0 = \bz$, where $\bz$ is a random variable with $\mathbb{E}[||\bz||_2^2] < \infty$.
    
    Then (\ref{eq:thm base sde}), (\ref{eq:thm expert sde}), and (\ref{eq:thm guided sde}) have almost surely (a.s.) unique solutions $\bx^{(1)}$, $\bx^{(2)}$ and $\bx$ with a.s. continuous sample paths. Furthermore, there exists an a.s. unique choice of $\bh$ such that $\bx_t = \bx^{(2)}_t$ for all $t \geq 0$, a.s., which is given by 
    \begin{align}
        \bh(\bx, t) = \boldf^{(2)}(\bx, t) - \boldf^{(1)}(\bx, t).
    \end{align}
\end{theorem}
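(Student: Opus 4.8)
The plan is to split the argument cleanly into two parts: first establishing that each of the three SDEs admits a unique solution, and then showing that the stated $\bh$ is the essentially unique drift correction identifying the guided process with the expert process. For the solvability part I would invoke the classical strong existence-and-uniqueness theorem for It\^o SDEs (as in \citet{oksendal2003stochastic}), whose hypotheses are a global Lipschitz condition and a linear-growth condition on the coefficients together with a square-integrable initial condition. Each drift here ($\boldf^{(1)}$, $\boldf^{(2)}$, and $\boldf^{(1)} + \bh$) is Lipschitz by assumption, and a globally Lipschitz function automatically has at most linear growth; the diffusion coefficient $g(t)$ does not depend on $\bx$ and is bounded and continuous, so it is trivially Lipschitz in $\bx$ and of linear growth. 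Since $\E[\|\bz\|_2^2] < \infty$, all hypotheses are met, yielding a.s.\ unique solutions $\bx^{(1)}, \bx^{(2)}, \bx$ with a.s.\ continuous paths. The key point is that the theorem delivers \emph{pathwise} (strong) uniqueness, which is exactly what the rest of the argument consumes.

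For existence of the claimed $\bh$, I would simply substitute $\bh = \boldf^{(2)} - \boldf^{(1)}$ into the guided SDE, which collapses its drift to $\boldf^{(2)}$, so the guided equation becomes \emph{literally} the expert equation, driven by the same Brownian motion $\bw$ and started from the same $\bz$. This $\bh$ is Lipschitz as a difference of Lipschitz maps, so it is an admissible correction. By the pathwise uniqueness just established, the two processes then coincide: $\bx_t = \bx^{(2)}_t$ for all $t$, a.s.

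For uniqueness, suppose some Lipschitz $\bh'$ produces a solution $\bx'$ with $\bx'_t = \bx^{(2)}_t$ a.s.\ for all $t$. I would write both processes in integral form and subtract; crucially, because all SDEs share the \emph{same} driving Brownian motion, the stochastic integrals $\int_0^t g(s)\,d\bw_s$ are identical and cancel. This leaves the drift identity $\int_0^t [\boldf^{(2)}(\bx^{(2)}_s,s) - \boldf^{(1)}(\bx^{(2)}_s,s) - \bh'(\bx^{(2)}_s,s)]\,ds = 0$ for all $t$, a.s. The integrand is continuous in $s$, being a composition of continuous (Lipschitz) functions with the continuous path $\bx^{(2)}$, so differentiating in $t$ forces it to vanish identically, giving $\bh'(\bx^{(2)}_s,s) = \boldf^{(2)}(\bx^{(2)}_s,s) - \boldf^{(1)}(\bx^{(2)}_s,s)$ along the entire trajectory.

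The main subtlety — and the step I expect to require the most care in stating correctly — is the precise meaning of uniqueness. The condition $\bx_t = \bx^{(2)}_t$ only constrains $\bh$ at points actually visited by the expert path, so $\bh$ is determined solely along that trajectory and is completely unconstrained off the support of $\bx^{(2)}$. Hence the correct reading of ``a.s.\ unique $\bh$'' is uniqueness of the restriction $\bh(\bx^{(2)}_t, t)$, which is exactly what the drift-matching computation yields. The second point deserving emphasis is that coupling all three SDEs to a \emph{single} Brownian motion is essential: it is what upgrades a distributional comparison to a pathwise one, allowing the noise terms to cancel and the drifts to be compared directly.
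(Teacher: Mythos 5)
Your existence argument and the first half of your uniqueness argument coincide with the paper's proof: both invoke the classical strong existence--uniqueness theorem for It\^o SDEs (Theorem \ref{thm:oksendal}, checking that Lipschitz drifts have at most linear growth and that $g$ is bounded), both obtain $\bx = \bx^{(2)}$ by substituting $\bh = \boldf^{(2)} - \boldf^{(1)}$ and appealing to pathwise uniqueness, and both cancel the common stochastic integral in the integral forms and differentiate the resulting drift identity to conclude $\tilde{\bh}(\bx^{(2)}_s, s) = \boldf^{(2)}(\bx^{(2)}_s, s) - \boldf^{(1)}(\bx^{(2)}_s, s)$ along the expert trajectory.

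However, your final step contains a genuine gap. You assert that $\bh$ is ``completely unconstrained off the support of $\bx^{(2)}$'' and accordingly reinterpret the conclusion as uniqueness of the restriction $\bh(\bx^{(2)}_t, t)$ only. That is a weaker statement than the theorem claims: the theorem asserts that the \emph{function} $\bh$ is the a.s.\ unique Lipschitz choice, given by $\bh(\bx,t) = \boldf^{(2)}(\bx,t)-\boldf^{(1)}(\bx,t)$ at every point of $\Reals^n \times [0,T]$. The reason your worry does not materialize --- and the reason your proposal never uses the hypothesis $g(0) > 0$ --- is that non-degenerate noise forces the marginal $\bx^{(2)}_t$ to have full support in $\Reals^n$ for every $t > 0$, so there is no ``off the support'' region to speak of. The paper closes the argument with a density-plus-Lipschitz step: set $\Delta = \tilde{\bh} - \bh_1$ where $\bh_1 = \boldf^{(2)} - \boldf^{(1)}$, with Lipschitz constant $C$; for any $\bx \in \Reals^n$, $t > 0$ and $\varepsilon > 0$, full support gives $\Prob\bigl(\|\bx^{(2)}_t - \bx\|_2 < \tfrac{\varepsilon}{2C}\bigr) > 0$, hence some $\by_\varepsilon$ with $\Delta(\by_\varepsilon, t) = 0$ lies within $\tfrac{\varepsilon}{2C}$ of $\bx$, whence $\|\Delta(\bx,t)\| \leq \varepsilon$; letting $\varepsilon \to 0$ yields $\Delta(\bx,t) = 0$ for all $\bx$ and all $t \in (0,T]$, and continuity in $t$ extends this to $t = 0$. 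Without this step (or an equivalent one), your proof establishes only trajectory-level agreement, not the theorem as stated.
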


In all diffusion model methods we are aware of, the drift and noise terms of the backward process indeed satisfy the pre-conditions of the theorem, under reasonable assumptions on the data distribution (see \ref{sec apdx: diffusion vs oksendal}), and using a network with smooth activation functions like Mish \citep{misramish} or GeLU \citep{hendrycks2016gaussian}.

Therefore, Theorem \ref{thm:existence and uniqueness} tells us that, if we were free to pick the gradients $\grad_\bx \rho(\bx_t, t)$, setting them to $\boldf^{(2)}(\bx_t, t) - \boldf^{(1)}(\bx_t, t)$ would be the ``best'' choice: it is the only choice resulting in guided samples \emph{exactly reproducing} the whole process $\bx^{(2)}$ (and, in particular, the distribution of $\bx^{(2)}_0$). 
We will now see that, in an idealized setting, there exists a unique function $\rho$ satisfying this criterion. We start by recalling the concept of a \emph{conservative vector field}, from multivariate calculus, and how it relates to gradients of continuously differentiable functions.

\begin{definition}[Conservative Vector Field, Definition 7.6 in \citep{lax2017multivariable}] \label{thm:def conservative}
    We say that a vector field $\boldf$ is conservative if it is the gradient of a continuously differentiable function $\Phi$,
    $$
    \mathbf{\boldf}(\bx)=\nabla_\bx \Phi(\bx)
    $$
    for all $\bx$ in the domain of $\boldf$. The function $\Phi$ is called a potential function of $\boldf$.
\end{definition}

Suppose we had access to the ground-truth scores $\bsbase_{\textrm{true}}(\bx, t)$ and $\bsexp_{\textrm{true}}(\bx, t)$ of the forward processes for the base and expert models (i.e. no approximation error). Then they are equal to $\grad_\bx \textrm{log} \ p^{(1)}_t(\bx)$ and $\grad_\bx \textrm{log} \ p^{(2)}_t(\bx)$, respectively. If we also assume $p^{(1)}_t(\bx)$ and $p^{(2)}_t(\bx)$ are continuously differentiable, we have that, by Definition \ref{thm:def conservative}, the diffusion models are conservative for each $t$. Thus, their difference is also conservative, i.e. the gradient of a continuously differentiable function. 

Hence, by the Fundamental Theorem for Line Integrals (Th. 7.2 in \citep{lax2017multivariable}), there exists a unique $\rho$ satisfying $\grad_\bx \rho(\bx, t) = \bsexp_{\textrm{true}}(\bx, t) - \bsbase_{\textrm{true}}(\bx, t)$, up to an additive constant, given by the line integral
\begin{align}
    \rho(\bx, t) = \int_{\bx_{\textrm{ref}}}^\bx [\bsexp_{\textrm{true}}(\bx', t) - \bsbase_{\textrm{true}}(\bx', t)] \ \cdot d\bx',
\end{align}
where $\bx_{\textrm{ref}}$ is some arbitrary reference point, and the line integral is path-independent. 

In practice, however, we cannot guarantee the absence of approximation errors, nor that the diffusion models are conservative. 

\subsection{Relative Reward Function of Two Diffusion Models}

To get around the possibility that $\bsbase$ and $\bsexp$ are not conservative, we may instead look for the conservative field best approximating $\bsexp(\bx, t) - \bsbase(\bx, t)$ in $L^2(\Reals^n, \Reals^n)$ (i.e. the space of square-integrable vector fields, endowed with the $L^2$ norm).
Using a well-known fundamental result on uniqueness of projections in $L^2$ (Th. \ref{thm:l2 proj}), we obtain the following:

\begin{proposition}[Optimal Relative Reward Gradient] \label{thm:orrg}
    Let  $\bsbase_\phi$ and $\bsexp_\Theta$ be any two diffusion models and $t \in (0, T]$, with the assumption that $\bsexp_\Theta(\cdot, t) - \bsbase_\phi(\cdot, t)$ is square-integrable. Then there exists a unique vector field $\bh_t$ given by
    \begin{align} \label{eq:Rn objective}
        \bh_t = \argmin_{\boldf \in \overline{\mathrm{Cons}(\Reals^n)}} \int_{\Reals^n}|| \boldf(\bx) - (\bsexp_\Theta(\bx, t) - \bsbase_\phi(\bx, t)) ||_2^2 \ d\bx
    \end{align}
    where $\overline{\mathrm{Cons}(\Reals^n)}$ denotes the closed span of gradients of smooth $W^{1,2}$ potentials. Furthermore, for any $\varepsilon > 0$, there is a smooth, square-integrable potential $\Phi$ with a square-integrable gradient satisfying:
    \begin{align}
        \int_{\Reals^n}|| \nabla_\bx \Phi(\bx) - \bh_t(\bx) ||_2^2 \ d\bx < \varepsilon
    \end{align}
    We call such an $\bh_t$ the \textbf{optimal relative reward gradient of $\bsbase_\phi$ and $\bsexp_\Theta$ at time $t$}.
\end{proposition}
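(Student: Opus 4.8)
The plan is to recognize this as a direct application of the Hilbert space projection theorem (the cited Th.~\ref{thm:l2 proj}) to the closed subspace $\overline{\mathrm{Cons}(\Reals^n)}$ of $L^2(\Reals^n, \Reals^n)$. First I would fix $t$ and abbreviate the target field as $\bv_t \defeq \bsexp_\Theta(\cdot, t) - \bsbase_\phi(\cdot, t)$, which lies in $L^2(\Reals^n, \Reals^n)$ precisely by the square-integrability hypothesis. The space $L^2(\Reals^n, \Reals^n)$ is a Hilbert space under $\langle \boldf, \bu \rangle = \int_{\Reals^n} \boldf(\bx) \cdot \bu(\bx)\, d\bx$, and the objective in (\ref{eq:Rn objective}) is exactly $\|\boldf - \bv_t\|_{L^2}^2$. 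Minimizing it over $\overline{\mathrm{Cons}(\Reals^n)}$ is therefore the problem of projecting $\bv_t$ onto that set.

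Next I would verify the two hypotheses the projection theorem needs. Completeness of $L^2$ is standard. For the feasible set, the key observation is that $\mathrm{Cons}(\Reals^n)$ -- the linear span of gradients of smooth $W^{1,2}$ potentials -- is genuinely a linear subspace: a finite combination $\sum_i c_i \grad_\bx \Phi_i = \grad_\bx(\sum_i c_i \Phi_i)$ is again the gradient of a smooth $W^{1,2}$ potential, since smooth $W^{1,2}$ functions form a vector space and the gradient is linear. Taking its closure, $\overline{\mathrm{Cons}(\Reals^n)}$ is by construction a closed linear subspace, hence a nonempty closed convex set. The projection theorem then yields a unique $\bh_t \in \overline{\mathrm{Cons}(\Reals^n)}$ attaining the infimum of $\|\boldf - \bv_t\|_{L^2}$, which is the first claim.

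For the approximation statement, I would use that $\bh_t$ lies in the closure of $\mathrm{Cons}(\Reals^n)$. By definition of closure, for any $\varepsilon > 0$ there is an element of $\mathrm{Cons}(\Reals^n)$ within $L^2$-distance $\sqrt{\varepsilon}$ of $\bh_t$, so its squared distance falls below $\varepsilon$. By the subspace observation above, that element is itself $\grad_\bx \Phi$ for a single smooth $W^{1,2}$ potential $\Phi$ (so $\Phi$ and its gradient are both square-integrable), giving $\int_{\Reals^n} \|\grad_\bx \Phi(\bx) - \bh_t(\bx)\|_2^2\, d\bx < \varepsilon$ as required.

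The proof is essentially immediate once the right Hilbert space is in place, so there is no single hard computation. The step I would be most careful about is the subspace/density bookkeeping: confirming that the linear span of gradients of smooth $W^{1,2}$ potentials is exactly the set of such gradients, so that the $\varepsilon$-approximant can be taken to be a single gradient $\grad_\bx \Phi$ rather than a finite sum, and that $\Phi$ inherits both smoothness and $W^{1,2}$ regularity. Everything else -- existence and uniqueness of the minimizer -- is delegated wholesale to the cited projection theorem.
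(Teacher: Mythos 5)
Your proposal is correct and follows essentially the same route as the paper: identify the difference of scores as an element of $L^2(\Reals^n,\Reals^n)$, apply the Hilbert space projection theorem to the closed subspace $\overline{\mathrm{Cons}(\Reals^n)}$ for existence and uniqueness of $\bh_t$, and use the definition of closure to extract a single gradient $\nabla_\bx \Phi$ within $\varepsilon$ of $\bh_t$. If anything, you are slightly more careful than the paper, which asserts without verification that $\overline{\mathrm{Cons}(\Reals^n)}$ is a subspace, whereas you explicitly check that finite linear combinations of gradients of smooth $W^{1,2}$ potentials are again such gradients.
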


For proof of Proposition~\ref{thm:orrg} see Appendix \ref{sec apdx: func anal}. 
It is important to note that for the projection to be well-defined, we required an assumption regarding the integrability of the diffusion models. Without this assumption, the integral would simply diverge.

The result in Proposition \ref{thm:orrg} tells us that we can get arbitrarily close to the optimal relative reward gradient using scalar potentials' gradients. Therefore, we may finally define the central notion in this paper:
\begin{definition}[$\varepsilon$-Relative Reward Function] \label{thm:def rrf}
    For an $\varepsilon > 0$, an $\varepsilon$-relative reward function of diffusion models $\bsbase_\phi$ and $\bsexp_\Theta$ is a function $\rho : \Reals^n \times [0, T] \to \Reals$ such that 
    \begin{align}
        \forall t \in (0, T]: \ \int_{\Reals^n}|| \nabla_\bx \rho(\bx, t) - \bh_t(\bx)||_2^2 \ d\bx < \varepsilon
    \end{align}
    where $\bh_t$ denotes the optimal relative reward gradient of $\bsbase_\phi$ and $\bsexp_\Theta$ at time $t$.
\end{definition}%

\subsection{Extracting Reward Functions} \label{sec:numerical method}

We now set out to actually approximate the relative reward function $\rho$. Definition \ref{thm:def rrf} naturally translates into an $L^2$ training objective for learning $\rho$:
\begin{align}
    L_{\textrm{RRF}}(\theta) = \E_{t \sim \mathcal{U}[0, T], \bx_t \sim p_t}\left[{|| \nabla_\bx \rho_\theta(\bx_t, t) - (\bsexp_\Theta(\bx_t, t) - \bsbase_\phi(\bx_t, t)) ||_2^2}\right]
\end{align}
where $p_t$ denotes the marginal at time $t$ of the forward noising process.

We optimize this objective via Empirical Risk Minimization and Stochastic Gradient Descent. 
See Algorithm \ref{alg:rrf training} for a version assuming access to the diffusion models and their training datasets, and Algorithm 2 in Appendix \ref{sec apdx: algos} for one which does not assume access to any pre-existing dataset. Our method requires no access to the environment or to a simulator. 
Our algorithm requires computing a second-order mixed derivative $D_\theta(\grad_\bx\rho(\bx, t)) \in \Reals^{m \times n}$ (where $m$ is the number of parameters $\theta$), for which we use automatic differentiation in PyTorch~\citep{NEURIPS2019_9015}.

\paragraph{Recovering Per-Time-Step Rewards.} 
We can parameterize $\rho$ using a single-time-step neural network $g_\theta(s, a, t)$ as
$
    \rho(\tau_t, t) = \frac{1}{N} \sum_{i=1}^N g_\theta(s^i_t, a^i_t, t)
$,
where $N$ is the horizon and $\tau_t$ denotes a trajectory at diffusion timestep $t$, and $s^i_t$ and $a^i_t$ denote the $i^{\text{th}}$ state and action in $\tau_t$. Then, $g_\theta(s, a, 0)$ predicts a reward for a state-action pair $(s, a)$.

\begin{figure}[t]
\begin{minipage}[t]{0.7\linewidth}
\vspace{-110pt}
\resizebox{0.9\linewidth}{!}{
\begin{algorithm}[H]
    \caption{Relative reward function training.}
    \label{alg:rrf training}
    \DontPrintSemicolon
    
    \KwInput{Base $\bsbase$ and expert $\bsexp$ diffusion models, dataset $\mathcal{D}$, number of iterations $I$.}
    \KwOutput{Relative reward estimator $\rho_\theta$.}
    Initialize reward estimator parameters \theta.\;
    \For{$j \in \{1, ..., I\}$}
    {
        Sample batch: $\bX_0 = [\bx^{(1)}_0, ..., \bx^{(N)}_0]$ from $\mathcal{D}$\;
        Sample times: $\bt = [t_1, ..., t_N]$ independently in $\mathcal{U}(0, T]$\;
        Sample forward process: $\bX_\bt \gets [\bx^{(1)}_{t_1}, ..., \bx^{(N)}_{t_N}]$\; %
        Take an optimization step on $\theta$ according to $\hat{L}_{\textrm{RRF}}(\theta)=$
        $\frac{1}{N} \sum_{i = 1}^N || \nabla_\bx \rho_\theta(\bx^{(i)}_{t_i}, t_i) - (\bsexp_\Theta(\bx^{(i)}_{t_i}, t_i) - \bsbase_\phi(\bx^{(i)}_{t_i}, t_i)) ||_2^2$\;
    }
\end{algorithm}}
\end{minipage}
\hspace{-20pt}
\begin{minipage}[t]{0.33\linewidth}
\centering
\includegraphics[width=1\linewidth]{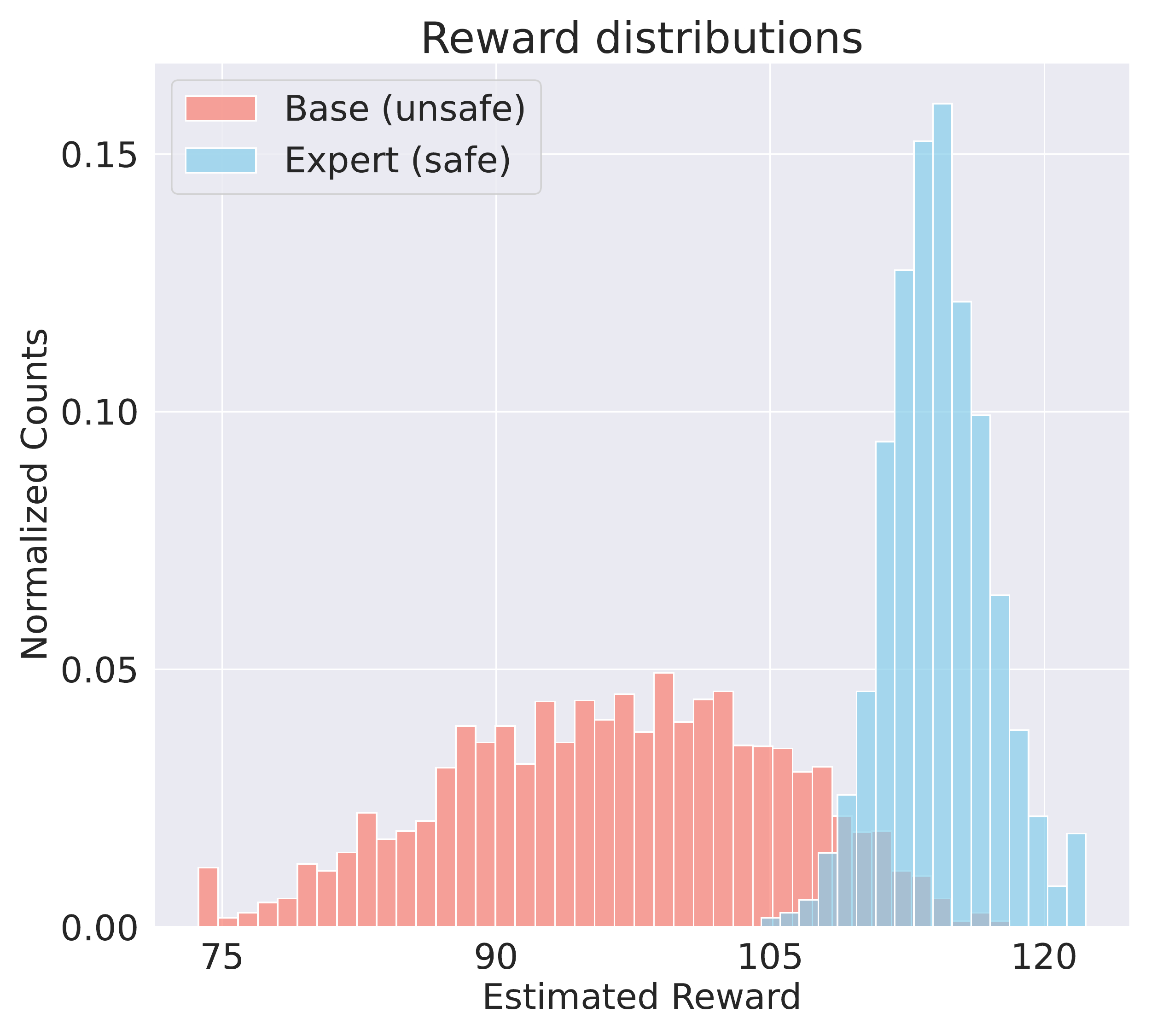}
    \caption{Learned rewards for base and expert diffusion models from Stable Diffusion (Sec. \ref{sec:experiments}). Prompts from the I2P dataset.}
    \label{fig:stable_diff_dist}
\end{minipage}
\vspace{-1em}
\end{figure}

\section{Experiments}\label{sec:experiments}

In this section, we conduct empirical investigations to analyze the properties of relative reward functions in practice. 
Our experiments focus on three main aspects: the alignment of learned reward functions with the goals of the expert agent, the performance improvement achieved through steering the base model using the learned reward, and the generalizability of learning reward-like functions to domains beyond decision-making (note that the relative reward function is defined for any pair of diffusion models). 
For details on implementation specifics, computational requirements, and experiment replication instructions, we refer readers to the Appendix.

\subsection{Learning Correct Reward Functions from Long-Horizon Plans} \label{sec:maze2d}

Maze2D~\citep{fu2020d4rl} features various environments which involve controlling the acceleration of a ball to navigate it towards various goal positions in 2D mazes. It is suitable for evaluating reward learning, as it requires effective credit assignment over extended trajectories.

\paragraph{Implementation.}
To conduct our experiments, we generate multiple datasets of trajectories in each of the 2D environments, following the data generation procedure from D4RL \citep{fu2020d4rl}, except sampling start and goal positions uniformly at random (as opposed to only at integer coordinates).

For four maze environments with different wall configurations (depicted in Figure~\ref{fig:maze2d heatmaps}), we first train a base diffusion model on a dataset of uniformly sampled start and goal positions, hence representing undirected behavior.
For each environment, we then train eight expert diffusion models on datasets with fixed goal positions. As there are four maze configurations and eight goal positions per maze configuration, we are left with 32 expert models in total.
For each of the 32 expert models, we train a relative reward estimator $\rho_\theta$, implemented as an MLP, via gradient alignment, as described in Alg. \ref{alg:rrf training}. We repeat this across 5 random seeds.

\paragraph{Discriminability Results.} For a quantitative evaluation, we train a logistic regression classifier per expert model on a balanced dataset of base and expert trajectories. Its objective is to label trajectories as base or expert using only the predicted reward as input. We repeat this process for each goal position with 5 different seeds, and average accuracies across seeds. The achieved accuracies range from 65.33\% to 97.26\%, with a median of 84.49\% and a mean of 83.76\%. These results demonstrate that the learned reward effectively discriminates expert trajectories.

\paragraph{Visualization of Learned Reward Functions.} To visualize the rewards, we use our model to predict rewards for fixed-length sub-trajectories in the base dataset. We then generate a 2D heatmap by averaging the rewards of all sub-trajectories that pass through each grid cell. Fig.~\ref{fig:maze2d heatmaps} displays some of these heatmaps, with more examples given in App.~\ref{app discrimination maze}.

\paragraph{Results.} We observe in Fig.~\ref{fig:maze2d heatmaps} that the network accurately captures the rewards, with peaks occurring at the true goal position of the expert dataset in 78.75\% $\pm$ 8.96\% of the cases for simpler mazes (\texttt{maze2d-open-v0} and \texttt{maze2d-umaze-v0}), and 77.50\% $\pm$ 9.15\% for more advanced mazes (\texttt{maze2d-medium-v1} and \texttt{maze2d-large-v1}). Overall, the network achieves an average success rate of 78.12\% $\pm$ 6.40\%. 
We further verify the correctness of the learned reward functions by retraining agents in Maze2D using the extracted reward functions (see Appendix~\ref{sec:app_retraining_maze2d}).

\begin{figure}
    \centering
    \includegraphics[width=\linewidth]{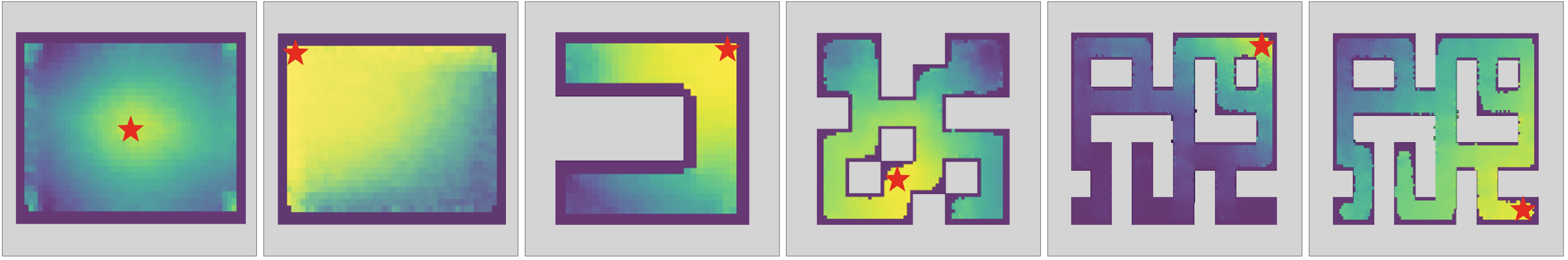}
    \caption{Heatmaps of the learned relative reward in Maze2D. \textcolor{red}{\FiveStar} denotes the ground-truth goal position. For more examples, see Appendix \ref{sec apdx: further heatmaps}.}
    \label{fig:maze2d heatmaps}
    \vspace{-1em}
\end{figure}

\paragraph{Sensitivity to Dataset Size.} Since training diffusion models typically demands a substantial amount of data, we investigate the sensitivity of our method to dataset size. Remarkably, we observe that the performance only experiences only a slight degradation even with significantly smaller sizes. For complete results, please refer to Appendix \ref{sec apdx: further heatmaps}.

\subsection{Steering Diffusion Models to Improve their Performance}

Having established the effectiveness of relative reward functions in recovering expert goals in the low-dimensional Maze2D environment, we now examine their applicability in higher-dimensional control tasks. Specifically, we evaluate their performance in the HalfCheetah, Hopper, and Walker-2D environments from the D4RL offline locomotion suite~\citep{fu2020d4rl}. These tasks involve controlling a multi-degree-of-freedom 2D robot to move forward at the highest possible speed. We assess the learned reward functions by examining whether they can enhance the performance of a weak base model when used for classifier-guided steering. If successful, this would provide evidence that the learned relative rewards can indeed bring the base trajectory distribution closer to the expert distribution.

\paragraph{Implementation.} In these locomotion environments, the notion of reward is primarily focused on moving forward. Therefore, instead of selecting expert behaviors from a diverse set of base behaviors, our reward function aims to guide a below-average-performing base model toward improved performance. Specifically, we train the base models using the \code{medium-replay} datasets from D4RL, which yield low rewards, and the expert models using the \code{expert} datasets, which yield high rewards. The reward functions are then fitted using gradient alignment as described in Algorithm~\ref{alg:rrf training}. Finally, we employ classifier guidance (Section~\ref{sec:background}) to steer each base model using the corresponding learned reward function.

\paragraph{Results.} We conduct 512 independent rollouts of the base diffusion model steered by the learned reward, using various guidance scales $\omega$ (Eq. \ref{eq:guided sde}). We use the unsteered base model as a baseline. We also compare our approach to a discriminator with the same architecture as our reward function, trained to predict whether a trajectory originates from the base or expert dataset. We train our models with 5 random seeds, and run the 512 independent rollouts for each seed. Steering the base models with our learned reward functions consistently leads to statistically significant performance improvements across all three environments. Notably, the Walker2D task demonstrates a 36.17\% relative improvement compared to the unsteered model. This outcome suggests that the reward functions effectively capture the distinctions between the two diffusion models. See Table \ref{tab:results_locomotion} (top three rows). 
We further conducted additional experiments in the Locomotion domains in which we steer a \emph{new} medium-performance diffusion model unseen during training, using the relative reward function learned from the base diffusion model and the expert diffusion model. 
We observed in Table~\ref{tab:results_locomotion} (bottom three rows) that our learned reward function significantly improves performance also in this scenario. 

\subsection{Learning a Reward-Like Function for Stable Diffusion}\label{sec:exp-stable_diffusion}

While reward functions are primarily used in sequential decision-making problems, we propose a generalization to more general domains through the concept of relative reward functions, as discussed in Section \ref{methods}. To empirically evaluate this generalization, we focus on one of the domains where diffusion models have demonstrated exceptional performance: image generation. Specifically, we examine Stable Diffusion \citep{rombach2022high}, a widely used 859m-parameter diffusion model for general-purpose image generation, and Safe Stable Diffusion \citep{schramowski2022safe}, a modified version of Stable Diffusion designed to mitigate the generation of inappropriate images.

\paragraph{Models.} The models under consideration are \emph{latent diffusion models}, where the denoising process occurs in a latent space and is subsequently decoded into an image. These models employ classifier-free guidance \citep{ho2021classifier} during sampling and can be steered using natural language prompts, utilizing CLIP embeddings \citep{radford2021learning} in the latent space. Specifically, Safe Stable Diffusion \citep{schramowski2022safe} introduces modifications to the \emph{sampling loop} of the open-source model proposed by Rombach et al. \citep{rombach2022high}, without altering the model's actual weights. In contrast to traditional classifier-free guidance that steers samples toward a given prompt, the modified sampler of Safe Stable Diffusion also directs samples \emph{away} from undesirable prompts \citep[Sec. 3]{schramowski2022safe}.

\begin{figure}[t]
  \quad
  \begin{minipage}[t]{0.4\textwidth}
    \strut\vspace*{-\baselineskip}\newline
    \centering
    \includegraphics[width=\linewidth]{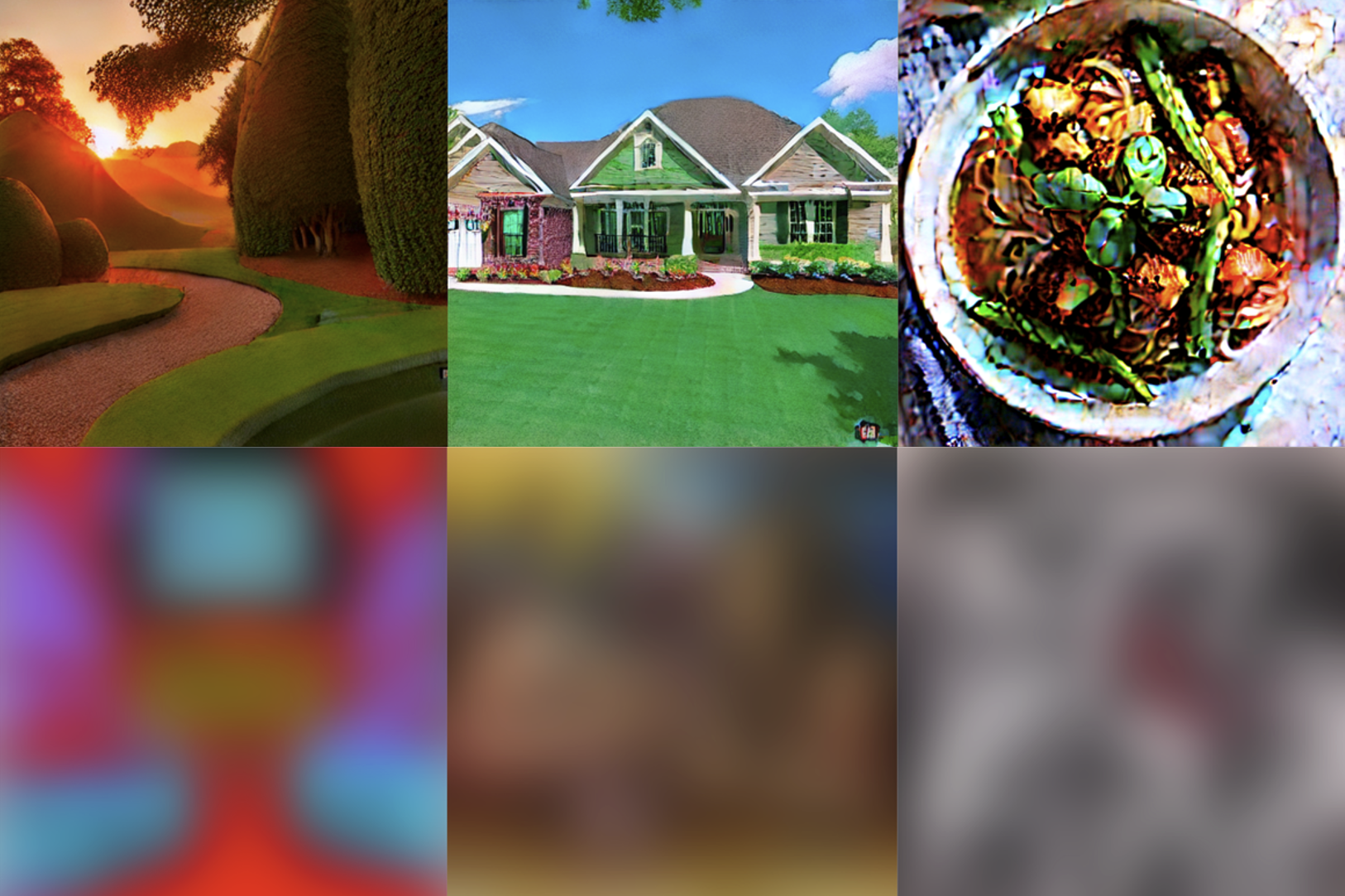}
    \caption{The 3 images with the highest learned reward in their batch (``safe''), and 3 with the lowest reward (``unsafe'', blurred), respectively.}
    \label{fig:unsafe}
  \end{minipage}\hfill%
  \begin{minipage}[t]{0.5\textwidth}
    \strut\vspace*{-\baselineskip}\newline
    \vspace{0pt}
    \quad
  \captionof{table}{
  Diffuser performance with different steering methods, on 3 RL environments and for a low-performance base model (top) and a medium-performance model (bottom).
  }
  \label{tab:results_locomotion}
    \vspace{0pt}
    \resizebox{1\linewidth}{!}{\begin{tabular}{cccc}
        \toprule
        \textbf{Environment} & \textbf{Unsteered} & \textbf{Discriminator} & \textbf{Reward (Ours)} \\
        \midrule
        Halfcheetah & $30.38 \pm 0.38$ & $30.41 \pm 0.38$ & \textbf{31.65 $\pm$ 0.32} \\
        Hopper & $24.67 \pm 0.92$ & $25.12 \pm 0.95$ & \textbf{27.04 $\pm$ 0.90} \\
        Walker2d & $28.20 \pm 0.99$ & $27.98 \pm 0.99$ & \textbf{38.40 $\pm$ 1.02} \\
        \midrule
        Mean & 27.75 & 27.84 & \textbf{32.36} \\
        \toprule
        \midrule
        Halfcheetah & $59.41 \pm 0.87$ & $70.79 \pm 1.92$ & \textbf{69.32 $\pm$ 0.8} \\
        Hopper & $58.8 \pm 1.01$ & $59.42 \pm 2.23$ & \textbf{64.97 $\pm$ 1.15} \\
        Walker2d & $96.12 \pm 0.92$ & $96.75 \pm 1.91$ & \textbf{102.05 $\pm$ 1.15} \\
        \midrule
        Mean & 71.44 & 75.65 & \textbf{78.78} \\
        \toprule
    \end{tabular}}
\end{minipage}
    \qquad
\end{figure}

\paragraph{Prompt Dataset.} To investigate whether our reward networks can detect a "relative preference" of Safe Stable Diffusion over the base Stable Diffusion model for harmless images, we use the I2P prompt dataset introduced by Schramowski et al. \citep{schramowski2022safe}. This dataset consists of prompts specifically designed to deceive Stable Diffusion into generating imagery with unsafe content. However, we use the dataset to generate sets of \emph{image embeddings} rather than actual images, which serve as training data for our reward networks. A portion of the generated dataset containing an equal number of base and expert samples is set aside for model evaluation.

\paragraph{Separating Image Distributions.} Despite the complex and multimodal nature of the data distribution in this context, we observe that our reward networks are capable of distinguishing between base and expert images with over 90\% accuracy, despite not being explicitly trained for this task. The reward histogram is visualized in Figure \ref{fig:stable_diff_dist}.

\paragraph{Qualitative Evaluation.} We find that images that receive high rewards correspond to safe content, while those with low rewards typically contain unsafe or disturbing material, including hateful or offensive imagery. To illustrate this, we sample batches from the validation set, compute the rewards for each image, and decode the image with the highest reward and the one with the lowest reward from each batch. Considering the sensitive nature of the generated images, we blur the latter set as an additional safety precaution. Example images can be observed in Figure \ref{fig:unsafe}.

\section{Conclusion}
To the best of our knowledge, our work introduces the first method for extracting relative reward functions from two diffusion models. We provide theoretical justification for our approach and demonstrate its effectiveness in diverse domains and settings. We expect that our method has the potential to facilitate the learning of reward functions from large pre-trained models, improving our understanding and the alignment of the generated outputs. It is important to note that our experiments primarily rely on simulated environments, and further research is required to demonstrate its applicability in real-world scenarios.

\section{Acknowledgements}
This work was supported by the Royal Academy of Engineering (RF$\backslash$201819$\backslash$18$\backslash$163). F.N. receives a scholarship from Fundação Estudar. F.N. also used TPUs granted by the Google TPU Research Cloud (TRC) in the initial exploratory stages of the project.

We would like to thank Luke Melas-Kyriazi, Prof. Varun Kanade, Yizhang Lou, Ruining Li and Eduard Oravkin for proofreading versions of this work. 

F.N. would also like to thank Prof. Stefan Kiefer for the support during the project, and Michael Janner for the productive conversations in Berkeley about planning with Diffusion Models.

We use the following technologies and repositories as components in our code: PyTorch \citep{NEURIPS2019_9015}, NumPy \citep{harris2020array}, Diffuser \citep{diffuser}, D4RL \citep{fu2020d4rl}, HuggingFace Diffusers \citep{von-platen-etal-2022-diffusers} and LucidRains's \href{https://github.com/lucidrains/denoising-diffusion-pytorch}{Diffusion Models in Pytorch repository}.

\newpage
\small
\bibliographystyle{agsm}
\bibliography{references.bib}

\newpage

\begin{appendices}

\section{More Details on Diffusion Models} \label{sec apdx: diffusion}

\subsection{Diffusion Models in Continuous Time} \label{seq:continuous diffusion}
At a high level, diffusion models work by adding noise to data $\bx$, and then learning to denoise it. 

The seminal paper on diffusion models \citep{sohl2015deep} and the more recent work of \citet{ho2020denoising} describe this in discrete time. \citet{SongErmon2021} show a direct correspondence between the aforementioned formulations of denoising diffusion models and an existing line of work on score-based generative modeling \citep{song2019generative}, and propose a continuous-time framework leveraging (Itô) Stochastic Differential Equations \citep{le2016brownian, oksendal2003stochastic, sarkka2019applied} to unify both methods. We give an overview of the continuous-time formulation, and defer to Appendix \ref{sec apdx: discrete diffusion} an account of the discrete case.

The forward noising process in continuous time is given by:
\begin{align} \label{eq:forward sde apdx}
    d\bx_t = \boldf(\bx_t, t)dt + g(t)d\bw_t
\end{align} 
where $f$ is a function that is Lipschitz, $\bw$ is a standard Brownian Motion \citep{le2016brownian} and $g$ is a ``continuous noise schedule'' (c.f. \ref{sec apdx: discrete diffusion}). A continuous version of the processes in \ref{sec apdx: discrete diffusion} \citep{sohl2015deep, ho2020denoising} is recovered by setting $g(s) = \sqrt{\beta_s}$ and $\boldf(\bx, s) = -\frac{1}{2}\beta_s\bx$. The resulting SDE resembles an Ornstein-Uhlenbeck process, which is known to converge in geometric rate to a $\mcN(0, I)$ distribution. This justifies the choice of distribution of $\bx_T$ in \ref{sec apdx: discrete diffusion}.

\citet{SongErmon2021} then use a result of \citet{ANDERSON1982313} to write the SDE satisfied by the \emph{reverse process} of \ref{eq:forward sde}, denoted $\bar{\bx}_t$, as:
\begin{align} \label{eq:backward sde apdx}
    d\bar{\bx}_t = [\boldf(\bar{\bx}_t, t) - g(t)^2 \nabla_{\bx} \textrm{log} \ p_t(\bar{\bx}_t)] \ dt + g(t) \ d\bar{\bw}_t
\end{align} 
Here $p_t(\bx)$ denotes the marginal density function of the forward process $\bx_t$, and $\bar{\bw}$ is a reverse Brownian motion (for further details on the latter, see \citep{ANDERSON1982313}). The learning step is then to learn an approximation $\bs_\Theta(\bx_t, t)$ to $ \nabla_{\bx} \textrm{log} \ p_t(\bx_t)$ (the \emph{score} of the distribution of $\bx_t$). This is shown by \citet{SongErmon2021} to be equivalent to the noise model $\bepsilon_\Theta$ in \ref{sec apdx: discrete diffusion}. The sampling step consists  of taking $\bar{\bx}_T \sim \mcN(0, I)$ for some $T > 0$, and simulating \ref{eq:backward sde apdx} \emph{backwards in time} to arrive at $\bar{\bx}_0 \approx \bx_0$. 

One of the advantages of this perspective is that the sampling loop can be offloaded to standard SDE solvers, which can more flexibly trade-off e.g. computation time for performance, compared to hand-designed discretizations \citep{karraselucidating}.

\subsection{Discrete-Time Diffusion Models} \label{sec apdx: discrete diffusion}

The seminal paper on diffusion models \citep{sohl2015deep} and the more recent work of \citet{ho2020denoising} formulate diffusion models in discrete time. The forward noising is a Markov Chain starting from an uncorrupted data point $\bx_0$, and repeatedly applying a (variance-preserving) Gaussian kernel to the data: 
$$
q_t(\bx_{t+1} | \bx_t) \sim \mcN(\sqrt{1 - \beta_t} \bx_t, \beta_t I)
$$
The coefficients $\beta_t$ are known as the \emph{noise schedule} of the diffusion model, and can be chosen to regulate the speed with which noise is added to the data \citep{karraselucidating}. 

The modeling step then aims to \emph{revert} the noising process by learning a backward kernel 
\begin{align} \label{eq: backward kernel apdx}
    p_\theta(\bx_{t-1} | \bx_t) \sim \mcN(\bmu_\theta(\bx_t, t), \sigma_t^2)
\end{align} 
The seminal paper of \citet{sohl2015deep} proposes a variational loss for training $\mu_\theta$, based on a maximum-likelihood framework. Subsequently, \citet{ho2020denoising} proposed a new parametrization of $\mu_\theta$, as well as a much simpler loss function. Their loss function is
\begin{align}
     L_{\textrm{simple}}(\theta) &\defeq \Eb{t, \bx_0, \bepsilon}{ \left\| \bepsilon -\bepsilon_\theta(\sqrt{\bar\alpha_t} \bx_0 + \sqrt{1-\bar\alpha_t}\bepsilon, t) \right\|^2} 
\end{align}
where $\alpha_t \defeq 1-\beta_t$, $\bar\alpha_t \defeq \prod_{s=1}^t \alpha_s$ and $ \bmu_\theta(\bx_t, t) = \frac{1}{\sqrt{\alpha_t}}\left( \bx_t - \frac{\beta_t}{\sqrt{1-\bar\alpha_t}} \bepsilon_\theta(\bx_t, t) \right)$. Intuitively, this loss says that $\bepsilon_\theta(\bx_t, t)$ should fit the noise $\bepsilon$ added to the initial datapoint $\bx_0$ up to time $t$. For a full account of this derivation, we refer the reader to \citet{ho2020denoising}.

\subsubsection{Classifier Guidance} \label{sec apdx: classifier guidance}
\citet{sohl2015deep} also propose a method for \emph{steering} the sampling process of diffusion models. Intuitively, if the original diffusion model computes a distribution $p(\bx_0)$, we would like to make it compute $p(\bx_0 | y)$, where we assume, for the sake of concreteness, that $y \in \{0, 1\}$. Applying Bayes's rule gives the factorization $p(\bx_0 | y) \propto p(\bx_0) \cdot p(y | \bx_0)$. The first term is the original distribution the diffusion model is trained on. The second term is a probabilistic classifier, inferring the probability that $y = 1$ for a given sample $\bx_0$.

We now leverage the framework in \ref{seq:continuous diffusion} to derive the classifier guidance procedure. Equation \ref{eq:backward sde apdx} shows we can sample from the backward process by modeling $\nabla_{\bx} \textrm{log} \ p_t(\bx_t)$ using $\bs_\Theta(\bx_t, t)$. Thus, to model $p(\bx_0 | y)$, we may instead approximate $\nabla_{\bx} \textrm{log} \ p(\bx_t | y)$ using:
\begin{align}
    \nabla_{\bx} \textrm{log} \ p(\bx_t | y) 
    &= \nabla_{\bx} \textrm{log} \ \frac{p(\bx_t) \ p(y | \bx_t)}{p(y)} \\ %
    &= \nabla_{\bx} \textrm{log} \ p(\bx_t) + \nabla_{\bx} \textrm{log} \ p(y | \bx_t) \\
    &\approx \bs_\Theta(\bx_t, t) + \nabla_{\bx} \rho(\bx_t, t)
\end{align}
where $\rho(\bx, t)$ is a neural network approximating $\textrm{log} \ p(y | \bx_t)$. In practice, we multiply the gradients of $\rho$ by a small constant $\omega$, called the \emph{guidance scale}.

The \emph{guided} reverse SDE hence becomes:
\begin{align} \label{eq:guided sde apdx}
    d\bar{\bx}_t = [\boldf(\bar{\bx}_t, t) - g(t)^2 [\bs_\Theta(\bar{\bx}_t, t) + \omega \nabla_{\bx} \rho(\bar{\bx}_t, t)]] \ dt + g(t) \ d\bar{\bw}_t
\end{align}

The main takeaway is, informally, that {we can steer a diffusion model to produce samples with some property $y$ by gradually pushing the samples in the direction that maximizes the output of a classifier predicting $p(y | \bx)$}.

\section{More Details on Reinforcement Learning as Probabilistic Inference} \label{sec: cif apdx}

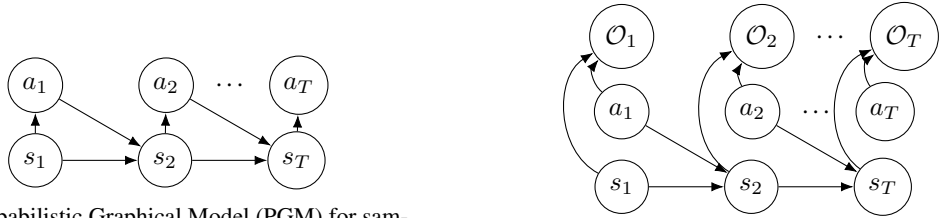
\begin{figure}
     \centering
     \begin{subfigure}[b]{0.45\textwidth}
    \centering
    \begin{tikzpicture}%
        \node[state] (s_1) at (0,0) {$s_1$};
        
        \node[state] (a_1) at (0,1) {$a_1$};

        \node[state] (s_2) [right =of s_1] {$s_2$};
        \node[state] (a_2) [right =of a_1] {$a_2$};
        
        \node[state] (s_T) [right =of s_2] {$s_T$};
        \node[state] (a_T) [right =of a_2] {$a_T$};

        \path (s_1) edge (a_1);
        \path (s_1) edge (s_2);
        \path (a_1) edge (s_2);
        
        \path (s_2) edge (a_2);
        \path (s_2) edge (s_T);
        \path (a_2) edge (s_T);
        
        \path (s_T) edge (a_T);

        \path (a_2) -- node[auto=false]{\ldots} (a_T);

    \end{tikzpicture}
    \caption{Probabilistic Graphical Model (PGM) for sampling from a Markov Decision Process (MDP) with a fixed policy $\pi$.}
    \label{fig:action scm}
\end{subfigure}
 \hfill
 \begin{subfigure}[b]{0.45\textwidth}
    \centering
    \begin{tikzpicture}%
        \node[state] (s_1) at (0,0) {$s_1$};
        
        \node[state] (a_1) at (0,1) {$a_1$};
        
        \node[state] (o_1) at (0,2) {$\Opt_1$};

        \node[state] (s_2) [right =of s_1] {$s_2$};
        \node[state] (a_2) [right =of a_1] {$a_2$};
        \node[state] (o_2) [right =of o_1] {$\Opt_2$};
        
        \node[state] (s_T) [right =of s_2] {$s_T$};
        \node[state] (a_T) [right =of a_2] {$a_T$};
        \node[state] (o_T) [right =of o_2] {$\Opt_T$};

        \path (s_1) edge[bend left=60] (o_1);
        \path (a_1) edge[bend left=45] (o_1);
        \path (s_1) edge (s_2);
        \path (a_1) edge (s_2);
        
        \path (s_2) edge[bend left=60] (o_2);
        \path (a_2) edge[bend left==45] (o_2);
        \path (s_2) edge (s_T);
        \path (a_2) edge (s_T);
        
        \path (s_T) edge[bend left=60] (o_T);
        \path (a_T) edge[bend left=45] (o_T);

        \path (o_2) -- node[auto=false]{\ldots} (o_T);
        \path (a_2) -- node[auto=false]{\ldots} (a_T);

    \end{tikzpicture}
    \caption{PGM for planning, with optimality variables $\Opt_{1:T}$.}
    \label{fig:full scm}
\end{subfigure}

\caption{Reinforcement Learning as Probabilistic Inference (see App. \ref{sec: cif apdx} for a detailed explanation).}
\end{figure}

\citet{control_as_inference} provides an in-depth exposition on existing methods for approaching sequential decision-making from a probabilistic and causal angle, referencing the work of e.g. \citep{todorov2008general, ziebart2010modeling, ziebart2008maximum, kappen2012optimal, kappen2011optimal}. We refer to it as the Control as Inference Framework (CIF).

The starting point for the formulation is the classical notion of a Markov Decision Problem (MDP), a stochastic process given by a state space $\mathcal{S}$, an action space $\mathcal{A}$, a (possibly stochastic) transition function $\mathcal{T} : \mathcal{S} \times \mathcal{A} \rightarrow \Delta_{\mathcal{S}}$, and a reward function $r : \mathcal{S} \times \mathcal{A} \rightarrow \Delta_{\mathbb{R}_{\geq 0}}$. Here $\Delta_{\mathcal{X}}$ denotes the set of distributions over a set $\mathcal{X}$. In this context, a policy $\pi : \mathcal{S} \rightarrow \Delta_{\mathcal{A}}$ represents a (possibly stochastic) way of choosing an action given the current state. 

The MDP starts at an initial state $s_0 \in \mathcal{S}$, and evolves by sampling $a_t \sim \pi(s_t)$, and then $s_{t+1} \sim \mathcal{T}(s_t, a_t)$ for $t \geq 0$. The reward received at time $t$ is $r_t \sim r(s_t, a_t)$. We consider here an episodic setting, where the MDP stops at a fixed time $T > 0$. We call the sequence $((s_t, a_t))_{t = 0}^{T}$ a \emph{trajectory}. Sampling corresponds to the Probabilistic Graphical Model (PGM) in \ref{fig:action scm}, where each action depends on the current state, and the next state depends on the current state and the current action.

To encode the process of \emph{choosing a policy to optimize a reward}, this PGM can be extended with \emph{optimality variables}, as per Fig. \ref{fig:full scm}. They are defined as $\Opt_t \sim \textrm{Ber}(e^{-r(s_t, a_t)})$, so that their \emph{distribution} encodes the dependency of reward on current states and actions. The problem of optimizing the reward can then be recast as \emph{sampling from $p(\tau | \Opt_{1:T})$}. Following \citep{control_as_inference}, one can apply Bayes's Rule to obtain 
\begin{equation} \label{eq:cif fact apdx}
    p(\tau | \Opt_{1:T}) \propto p(\tau) \cdot p(\Opt_{1:T} | \tau)
\end{equation}
which factorizes the distribution of optimal trajectories (up to a normalizing constant) as a \emph{prior} $p(\tau)$ over trajectories and a \emph{likelihood term} $p(\Opt_{1:T} | \tau)$. Observe that, from the definition of $\Opt_{1:T}$ and the PGM factorization in Fig. \ref{fig:full scm}, we have $p(\Opt_{1:T} | \tau) = e^{- \sum_t r(s_t, a_t)}$. Hence, in the CIF, $-\textrm{log} \ p(\Opt_{1:T} | \tau)$ (a \emph{negative log likelihood})  corresponds to the cumulative reward of a trajectory $\tau$.

\section{Details on Main Derivations}\label{sec apdx: sde theorems}
\subsection{Existence and Uniqueness of Strong Solutions to Stochastic Differential Equations} 

\begin{theorem}[Existence and uniqueness theorem for SDEs, c.f. page 66 of \citet{oksendal2003stochastic}]  \label{thm:oksendal}
    Let $T>0$ and $b(\cdot, \cdot):[0, T] \times \Reals^n \rightarrow \Reals^n, \sigma(\cdot, \cdot):[0, T] \times \Reals^n \rightarrow \Reals^{n \times m}$ be measurable functions satisfying
    $$
    |b(t, x)|+|\sigma(t, x)| \leq C(1+|x|) ; \quad x \in \Reals^n, t \in[0, T]
    $$
    for some constant $C$, where $|\sigma|^2=\sum\left|\sigma_{i j}\right|^2$, and such that
    $$
    |b(t, x)-b(t, y)|+|\sigma(t, x)-\sigma(t, y)| \leq D|x-y| ; \quad x, y \in \Reals^n, t \in[0, T]
    $$
    for some constant $D$. Let $Z$ be a random variable which is independent of the $\sigma$-algebra $\mathcal{F}_{\infty}^{(m)}$ generated by $B_s(\cdot), s \geq 0$ and such that
    $$
    \E\left[|Z|^2\right]<\infty
    $$
    Then the stochastic differential equation
    $$
    d X_t=b\left(t, X_t\right) d t+\sigma\left(t, X_t\right) d B_t, \quad 0 \leq t \leq T, X_0=Z
    $$
    has a unique $t$-continuous solution $X_t(\omega)$ with the property that
    $X_t(\omega)$ is adapted to the filtration $\mathcal{F}_t^Z$ generated by $Z$ and $B_s(\cdot) ; s \leq t$ and
    $$
    \E\left[\int_0^T\left|X_t\right|^2 d t\right]<\infty .
    $$
\end{theorem}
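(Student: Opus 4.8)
The plan is to prove this by the classical method of Picard iteration (successive approximations), mirroring the Picard--Lindel\"of proof for ordinary differential equations, with the It\^o isometry supplying the stochastic analogue of the triangle inequality for integrals. The linear-growth and Lipschitz hypotheses on $b$ and $\sigma$ play exactly the roles they do in the deterministic theory, and the finite second moment $\E[|Z|^2] < \infty$ keeps every iterate inside the Hilbert space of square-integrable adapted processes. I would organize the argument as: (i) an a priori estimation lemma; (ii) uniqueness; (iii) existence; (iv) verification of the stated regularity and moment properties.

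The workhorse is a single estimate for the integral map $X \mapsto Z + \int_0^t b(s, X_s)\,ds + \int_0^t \sigma(s, X_s)\,dB_s$. For two square-integrable adapted processes $X, Y$, I split the squared norm of the difference of their images into drift and diffusion contributions via $(a+b)^2 \le 2a^2 + 2b^2$. The drift part is controlled by Cauchy--Schwarz together with the Lipschitz bound on $b$; the diffusion part is controlled by the It\^o isometry together with the Lipschitz bound on $\sigma$. This yields, for a constant $K$ depending only on $T, C, D, n, m$,
\begin{align*}
    \E\!\left[\, \left| \int_0^t (b(s,X_s)-b(s,Y_s))\,ds + \int_0^t (\sigma(s,X_s)-\sigma(s,Y_s))\,dB_s \right|^2 \right] \le K \int_0^t \E\!\left[|X_s - Y_s|^2\right]\,ds.
\end{align*}

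For uniqueness, I apply this with $X, Y$ two solutions in the stated class (both fixed points of the integral map), obtaining $\E[|X_t - Y_t|^2] \le K \int_0^t \E[|X_s - Y_s|^2]\,ds$ with the left side a priori finite; Gr\"onwall's inequality then forces $\E[|X_t - Y_t|^2] = 0$ for every $t$, and a.s. continuity of sample paths upgrades this to indistinguishability. For existence, I set $X^{(0)}_t \equiv Z$ and $X^{(k+1)}_t = Z + \int_0^t b(s, X^{(k)}_s)\,ds + \int_0^t \sigma(s, X^{(k)}_s)\,dB_s$, checking via the linear-growth bound that each iterate is well-defined and square-integrable. Iterating the estimation lemma gives the factorial decay $\E[|X^{(k+1)}_t - X^{(k)}_t|^2] \le M (Kt)^{k+1}/(k+1)!$, which is summable and hence produces an $L^2$ limit $X$; passing to the limit inside the integral equation, using the Lipschitz bounds to control the integrands, shows $X$ solves the SDE.

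The step I expect to be the main obstacle is upgrading the $L^2$ convergence of the Picard iterates to almost sure \emph{uniform} convergence on $[0,T]$, which is what delivers a solution with a.s. continuous paths rather than one defined only up to modification at each fixed $t$. The pointwise-in-$t$ estimate above does not suffice here; instead I would bound $\E[\sup_{t \le T} |X^{(k+1)}_t - X^{(k)}_t|^2]$, which requires Doob's maximal inequality to pull the supremum inside the stochastic-integral term before applying the It\^o isometry. Combined with the factorial decay and the Borel--Cantelli lemma, this gives uniform convergence along almost every path, so the limit inherits continuity from the continuous iterates. Finally, adaptedness of $X$ follows because each iterate is adapted and adaptedness is preserved under (a.s.) limits, while the linear-growth condition applied to the limit yields $\E[\int_0^T |X_t|^2\,dt] < \infty$, completing the verification. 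Since this is a standard textbook result, in the paper I would simply cite \citet{oksendal2003stochastic} rather than reproduce the full argument.
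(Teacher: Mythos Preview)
Your proposal is correct: the paper does not prove this theorem at all but simply quotes it verbatim from \citet{oksendal2003stochastic} as a reference result, exactly as you anticipate in your final sentence. The Picard-iteration sketch you outline (with It\^o isometry, Gr\"onwall, Doob's maximal inequality, and Borel--Cantelli for pathwise uniform convergence) is precisely the proof in \O ksendal's textbook, so there is nothing to compare.
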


\begin{remark}
    In the above, the symbol $| \ \cdot \ |$ is overloaded, and taken to mean the \emph{norm} of its argument. As everything in sight is finite-dimensional, and as all norms in finite-dimensional normed vector spaces are topologically equivalent (Theorem 2.4-5 in \citet{kreyszig1978introduction}), the stated growth conditions do not depend on the particular norm chosen for $\Reals^n$.

    Also, whenever we talk about a \emph{solution to an SDE} with respect to e.g. the standard Brownian motion $\bw$ and an initial condition $\bz$, we are always referring only to $(\mathcal{F}_t^\bz)_{t \geq 0}$-adapted stochastic processes, as per Theorem \ref{thm:oksendal}.
\end{remark}

\begin{corollary} \label{thm:oksendal corollary}
    Fix a Brownian Motion $\bw$ and let $(\mathcal{F}_t)_{t \geq 0}$ be its natural filtration.
    Consider an SDE \mbox{$d\bx_t = \boldf(\bx_t, t) + g(t) d\bw_t$} with initial condition $\bz \in \mathcal{L}^2$ independent of $\mathcal{F}_\infty = \cup_{t \geq 0} \mathcal{F}_t$.
    Suppose \mbox{$\boldf: \Reals^n \times [0, T] \rightarrow \Reals^n$} is Lipschitz with respect to $(\Reals^n \times [0, T], \Reals^n)$ and $g : [0, T] \rightarrow \Reals_{\geq 0}$ is continuous and bounded.
    
    Then the conclusion of \ref{thm:oksendal} holds and there is an a.s. unique $(\mathcal{F}^\bz_t)_{t \geq 0}$-adapted solution $(\bx_t)_{t \geq 0}$ having a.s. continuous paths,
    where $(\mathcal{F}^\bz_t)_{t \geq 0}$ is defined as in \ref{thm:oksendal}.
\end{corollary}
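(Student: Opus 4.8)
The plan is to reduce the statement to Øksendal's Theorem~\ref{thm:oksendal} by checking that its hypotheses are met in this setting. I identify the drift with $b(t, \bx) \defeq \boldf(\bx, t)$ and the diffusion coefficient with the matrix-valued map $\sigma(t, \bx) \defeq g(t)\,\bI_n$, so that $m = n$ and $\sigma(t, \bx)\,d\bw_t = g(t)\,d\bw_t$ recovers the SDE in the corollary. Both $b$ and $\sigma$ are measurable, since $\boldf$ is Lipschitz (hence continuous) and $g$ is continuous by assumption.

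First I would verify the linear growth bound on the drift. Writing $L$ for the Lipschitz constant of $\boldf$, for every $\bx \in \Reals^n$ and $t \in [0, T]$ we have $|\boldf(\bx, t)| \leq |\boldf(\bx, t) - \boldf(\bzero, t)| + |\boldf(\bzero, t)| \leq L|\bx| + |\boldf(\bzero, t)|$. The map $t \mapsto \boldf(\bzero, t)$ is continuous on the compact interval $[0, T]$, hence bounded by some $M$, so $|\boldf(\bx, t)| \leq C(1 + |\bx|)$ with $C \defeq \max(L, M)$. For the diffusion coefficient the growth bound is immediate: $|\sigma(t, \bx)| = \sqrt{n}\,|g(t)| \leq \sqrt{n}\,\sup_{t \in [0,T]} g(t)$ is bounded because $g$ is bounded, so it is trivially dominated by a multiple of $1 + |\bx|$.

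Next I would check the two Lipschitz conditions of Theorem~\ref{thm:oksendal}. Restricting the Lipschitz property of $\boldf$ to a fixed time gives $|\boldf(\bx, t) - \boldf(\by, t)| \leq L|\bx - \by|$ uniformly in $t$, which is exactly the required bound for $b$; and the condition for $\sigma$ holds trivially with any constant, since $\sigma(t, \bx) = g(t)\,\bI_n$ does not depend on $\bx$. Finally, the hypothesis $\bz \in \mathcal{L}^2$ gives $\E[|\bz|^2] < \infty$, and the assumed independence of $\bz$ from $\mathcal{F}_\infty$ is precisely the independence from the $\sigma$-algebra generated by the Brownian motion required in Theorem~\ref{thm:oksendal}. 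With every hypothesis verified, Theorem~\ref{thm:oksendal} yields the a.s.\ unique, $(\mathcal{F}_t^\bz)_{t \geq 0}$-adapted, a.s.\ continuous solution together with the bound $\E[\int_0^T |\bx_t|^2\,dt] < \infty$.

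The verification is almost entirely routine; the only step requiring more than a one-line observation is converting the Lipschitz property of the drift into a linear growth bound, which rests on the boundedness of $t \mapsto \boldf(\bzero, t)$ over the compact interval $[0, T]$. Because all norms on the finite-dimensional spaces involved are equivalent (as noted in the remark following Theorem~\ref{thm:oksendal}), the particular product metric used to define ``Lipschitz with respect to $(\Reals^n \times [0, T], \Reals^n)$'' does not affect any of these estimates.
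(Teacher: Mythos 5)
Your proposal is correct and follows essentially the same route as the paper: both reduce the corollary to Theorem~\ref{thm:oksendal} by checking measurability, the linear growth bound on the drift (you bound $|\boldf(\bzero,t)|$ by continuity on the compact interval, the paper uses joint Lipschitzness from the point $(\bzero,0)$ — an immaterial difference), the uniform-in-$t$ Lipschitz bound in $\bx$, the trivial conditions on $\sigma(t,\bx)=g(t)\,\bI_n$, and the $\mathcal{L}^2$/independence hypotheses on $\bz$. If anything, your verification is slightly more careful than the paper's in spelling out the matrix norm of $\sigma$ and noting that its Lipschitz condition in $\bx$ is vacuous.
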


\begin{proof}
    Firstly, note that, as $\boldf$ and $g$ are continuous, they are Lebesgue-measurable. It remains to check the growth conditions in \ref{thm:oksendal}.

    As $\boldf$ is Lipschitz, there exists $C_1$ (w.l.o.g. $C_1>1$) such that, for any $\bx, \by \in \Reals^n$ and $t, s \in [0, T]$:
    \begin{align}
        |\boldf(\bx, t) - \boldf(\by, s)| 
        &\leq C_1(|\bx - \by| + |t - s|) \\ 
        &\leq C_1(|\bx - \by| + T) 
    \end{align}

    In particular, taking $\by = \mathbf{0}$ and $s = 0$, we obtain that $|\boldf(\bx, t) - \boldf(\mathbf{0}, 0)| \leq C_1(|\bx| + T)$. Let $M = |\boldf(\mathbf{0}, 0)|/C_1$. Then, by the triangle inequality, $|\boldf(\bx, t)| \leq C_1(|\bx| + T + M)$.

    As $g$ is bounded, there exists $C_2 > 0$ such that $|g(t)| \leq C_2$, and so for any $t, s \in [0, T]$, we have:
    \begin{align}
        |g(t) - g(s)| \leq 2C_2
    \end{align}

    Hence we have, for $t \in [0, T]$ and $\bx, \by \in \Reals^n$:
    \begin{align}
        &|\boldf(\bx, t)| +  |g(t)| \leq ((T+M)C_1 + C_2)(1 + |\bx|)
        &|\boldf(\bx, t) - \boldf(\by, t)| \leq C_1 |\bx - \by|
    \end{align}
    which yields the growth conditions required in \ref{thm:oksendal}, and the conclusion follows.
\end{proof}

\subsection{Proof of Theorem \ref{thm:existence and uniqueness}}

\begin{proof}[Proof of Theorem \ref{thm:existence and uniqueness}]

    We reference the fundamental result on the existence and uniqueness of strong solutions to SDEs, presented in Section 5.2.1 of \citep{oksendal2003stochastic} and reproduced above. In particular, Corollary \ref{thm:oksendal corollary} shows that the Existence and Uniqueness Theorem \ref{thm:oksendal} applies for SDEs \ref{eq:thm base sde} and \ref{eq:thm expert sde}, by our assumptions on $\boldf^{(1)}$, $\boldf^{(2)}$ and $g$. As also we restrict our attention to $\bh$ that are Lipschitz, and sums of Lipschitz functions are also Lipschitz, the corollary also applies to SDE \ref{eq:thm guided sde}. This establishes the existence of a.s. unique (adapted) solutions with a.s. continuous sample paths for these SDEs. Call them $\bx^{(1)}$, $\bx^{(2)}$ and $\bx$, respectively.

    Denote $\bh_1(\bx, t) = \boldf^{(2)}(\bx, t) - \boldf^{(1)}(\bx, t)$. The choice $\bh = \bh_1$ makes SDE \ref{eq:thm guided sde} have the same drift and noise coefficients as SDE \ref{eq:thm expert sde}, and so, by the uniqueness of solutions we established, it follows that $\bx^{(2)}_t = \bx_t$ for all $t$ a.s., with this choice of $\bh$.

    Now suppose we have a Lipschitz-continuous $\tilde{\bh}$ which yields an a.s. unique, $t$-continuous solution $(\tilde{\bx}_t)_{t \geq 0}$ that is indistinguishable from $\bx^{(2)}$. (i.e. equal to $\bx^{(2)}$ for all $t$, a.s.). We show that $\tilde{\bh} = \bh_1$.

    As $\tilde{\bx}$ and $\bx^{(2)}$ are indistinguishable and satisfy SDEs with the same noise coefficient, we obtain a.s., for any $t \in [0, T]$:
    \begin{align}
        0 &= \tilde{\bx}_t - \bx^{(2)}_t \\
          \label{eq:main deriv 1}
          &= \left(\int_0^t \boldf^{(1)}(\tilde{\bx}_s, s) + \tilde{\bh}(\tilde{\bx}_s, s) ds + \int_0^t g(s) \ d\bw_s\right) - 
                \left(\int_0^t \boldf^{(2)}(\bx^{(2)}_s, s)ds + \int_0^t g(s) \ d\bw_s\right) \\
         &= \int_0^t \boldf^{(1)}(\tilde{\bx}_s, s) + \tilde{\bh}(\tilde{\bx}_s, s) ds - 
                \int_0^t \boldf^{(2)}(\bx^{(2)}_s, s)ds \\
         \label{eq:main deriv 2}
         &= \int_0^t \boldf^{(1)}(\bx^{(2)}_s, s) + \tilde{\bh}(\bx^{(2)}_s, s) ds - 
                \int_0^t \boldf^{(2)}(\bx^{(2)}_s, s)ds \\
         &= \int_0^t \tilde{\bh}(\bx^{(2)}_s, s) - (\boldf^{(2)}(\bx^{(2)}_s, s) - \boldf^{(1)}(\bx^{(2)}_s, s)) ds \\
         \label{eq:main deriv 3}
         &= \int_0^t \tilde{\bh}(\bx^{(2)}_s, s) - \bh_1(\bx^{(2)}_s, s) ds 
    \end{align}
    where in \ref{eq:main deriv 1} we substitute the integral forms of the SDEs satisfied by $\tilde{\bx}$ and $\bx^{(2)}$, and in  \ref{eq:main deriv 2} we use that the processes are indistinguishable to replace $\tilde{\bx}$ by $\bx^{(2)}$ in one of the integrals.
    
    Hence, with probability 1, $\int_0^t \tilde{\bh}(\bx^{(2)}_s, s) ds = \int_0^t 
    \bh_1(\bx^{(2)}_s, s) ds $. As $\bx^{(2)}$ is a.s. continuous, we also have that the mappings $s \mapsto \bh_1(\bx^{(2)}_s, s)$ and $s \mapsto \tilde{\bh}(\bx^{(2)}_s, s)$ are continuous with probability 1. We may hence apply the Fundamental Theorem of Calculus and differentiate \ref{eq:main deriv 3} to conclude that a.s. $\tilde{\bh}(\bx^{(2)}_s, s) = 
    \bh_1(\bx^{(2)}_s, s)$ for all $s$.

    As $g(0) > 0$, $\bx^{(2)}_t$ is supported on all of $\Reals^n$ for any $t > 0$. Therefore, the above implies that $\tilde{\bh}(\bx, s) = \bh_1(\bx, s)$ for all $s \in (0, T]$ \emph{and} all $\bx \in \Reals^n$. 

    More formally, let $\Delta(\bx', s) = \tilde{\bh}(\bx', s) - \bh_1(\bx', s)$. Then $\Delta(\bx^{(2)}_s, s) = 0$ for all $s$ a.s.. Also, $\Delta$ is Lipschitz. Let $C$ be its Lipschitz constant. Take $\bx \in \Reals^n$ and fix $\varepsilon > 0$ and $t > 0$. Then $\Prob(||\bx^{(2)}_t - \bx||_2 < \frac{\varepsilon}{2C}) > 0$, as $\bx^{(2)}_t$ is supported in all of $\Reals^n$. Hence, by the above, there exists $\by_\varepsilon \in B(\bx, \frac{\varepsilon}{2C})$ such that $\Delta(\by_\varepsilon, t) = 0$. As $\tilde{\bh}$ is Lipschitz, for any $\bx' \in B(\bx, \frac{\varepsilon}{2C})$, we have that 
    \begin{align}
        ||\Delta(\bx', t)|| &\leq C ||\bx' - \by_\varepsilon|| \\
                            &\leq C (||\bx' - \bx|| + ||\bx - \by_\varepsilon||) \\
                            &\leq C (\frac{\varepsilon}{2C} + \frac{\varepsilon}{2C}) \\
                            &= \varepsilon
    \end{align}
    As $\varepsilon$ was arbitrary, we have that $\Delta(\bx', t) \to 0$ as $\bx' \to \bx$. As $\Delta$ is continuous (since it is Lipschitz), it follows that $\Delta(\bx, t) = 0$. As $\bx$ was also arbitrary, we have that $\tilde{\bh}(\bx, s) = \bh_1(\bx, s)$ for all $s \in (0, T]$ \emph{and} all $\bx \in \Reals^n$.
    
    As $\bh$ must be chosen to be (Lipschitz) continuous also with respect to $t$, for any $\bx$ it must be that
    \begin{align}
        \tilde{\bh}(\bx, 0) = \lim_{s \rightarrow 0^+} \tilde{\bh}(\bx, s) = \lim_{s \rightarrow 0^+} \bh(\bx, s) = \bh_1(\bx, 0)
    \end{align}
    This completes the proof of the uniqueness of the choice $\bh = \bh_1$.
\end{proof}

\subsection{Does the Backward Process of a Diffusion Model Fit in Theorem \ref{thm:oksendal}?} \label{sec apdx: diffusion vs oksendal}

In the forward process \ref{eq:forward sde}, $\boldf$ is taken to be Lipschitz, and $g$ is always bounded, e.g. in \citep{ho2020denoising, song2021denoising, SongErmon2021}. However, in the backward process, one may be concerned about whether the score term $-\nabla_\bx \textrm{log} \ p_t(\bx_t, t)$ is Lipschitz. We argue that it is reasonable to assume that it is. 

We consider a setting where $g(t) = 1$, and $\boldf(\bx, t) = -\lambda \bx$. Then the forward process is the well-understood Ornstein-Uhlenbeck (or Langevin) process~\citep[p. 74]{oksendal2003stochastic}:
\begin{align}
    d\bx_t = - \lambda \bx_t \ dt + \ d\bw_t
\end{align}

Consider the simplified case where the initial condition is deterministic: $\bx_0 = \by \in \Reals^n$. Then $\bx$ is a Gaussian process of co-variance function given by $K(s, t)=\operatorname{cov}\left(X_s, X_t\right)=\frac{\mathrm{e}^{-\lambda|t-s|}-\mathrm{e}^{-\lambda(t+s)}}{2 \lambda} I$, where $I$ is the identity matrix, as per \citet{le2016brownian}, page 226. Its mean function is $\boldm_t = \by e^{-\lambda t}$. In particular, it has variance $\sigma_t^2 = \frac{1-\mathrm{e}^{-2\lambda t}}{2 \lambda}$, and $\sigma_t^2 \to \frac{1}{2\lambda}$ as $t \to \infty$.

We follow the theoretical analysis of the convergence of diffusion models by \citet{de2022convergence} and assume the backward diffusion process finishes at a time $\varepsilon > 0$. Hence, the backward process $\bar{\bx}_t$ is now supported in all of $\Reals^n$ for any $t \in [\epsilon, T]$. The original data need not be supported everywhere, and in fact is often assumed to not be, c.f. the manifold hypothesis in \citep{de2022convergence}. In our simplified case, it is supported at a point. This is why $t = 0$ needs to be excluded by the backward process finishing at some $\varepsilon>0$. 

From the aforementioned properties of the Ornstein-Uhlenbeck process, $p_t(\bx_t) = C\textrm{exp}\left({-\frac{||\bx_t - \boldm_t||_2^2}{2\sigma_t^2}}\right)$, where $C$ is a constant independent of $\bx$. But then $-\nabla_\bx \textrm{log} \ p_t(\bx_t, t) = \frac{\bx_t - \boldm_t}{\sigma_t^2}$. For any fixed $t$, we hence have that $-\nabla_\bx \textrm{log} \ p_t(\bx_t, t)$ is $1/{\sigma_t^2}$-Lipschitz. Since the variance $\sigma_t$ is bounded away from $0$ for $t \in [\varepsilon, T]$, we may pick a Lipschitz constant that works uniformly across all $\bx$ and $t$ (for instance, $1/{\sigma_\varepsilon^2}$).

This can be extended to the case where the initial data is bounded, which is frequently assumed (e.g. by \citet{de2022convergence}). Informally, if the data is bounded, it cannot change the \emph{tails} of the distribution of $\bx_t$ significantly. We omit further details, as this discussion is beyond the scope of the paper. 

In summary, the forward noising process is very similar to an Ornstein-Uhlenbeck process, so, for practical purposes, it is reasonable to assume the scores $\nabla_\bx \textrm{log} \ p_t(\bx_t, t)$ are Lipschitz.

\subsection{Existence and Uniqueness of Minimizer of (\ref{eq:Rn objective})} \label{sec apdx: func anal}

\begin{definition}
    $L^2(\Reals^n, \Reals^n)$ is the Hilbert space given by
    \begin{align}
        L^2(\Reals^n, \Reals^n) = \left\{\boldf : \Reals^n \rightarrow \Reals^n : \int_{\Reals^n} ||\boldf(\bx)||^2_2 d\bx < \infty \right\}
    \end{align}
\end{definition}

\begin{remark}
    It is easy to show $L^2(\Reals^n, \Reals^n)$ is a Hilbert Space with norm given by 
    \begin{align}
        ||\boldf||_2^2 = \int_{\Reals^n} ||\boldf(\bx)||^2_2 \ d\bx
    \end{align}
    For instance, one can start from the standard result that $L^2(\Reals^n, \Reals)$ is a Hilbert Space, and apply it to each coordinate of the output of $\boldf$.
\end{remark}

\begin{definition}
    Denote by $\textrm{Cons}(\Reals^n)$ the space of the gradients of smooth, square-integrable potentials from $\Reals^n$ to $\Reals$ with square-integrable gradient, i.e.
    \begin{align}
        \textrm{Cons}(\Reals^n) = \left\{\grad f : f \textrm{ is smooth, } \int_{\Reals^n} |f(\bx)|^2 d\bx < \infty \textrm{ and }  \int_{\Reals^n} ||\grad f(\bx)||^2_2 d\bx < \infty\right\}
    \end{align}
\end{definition}

\begin{remark}
    Clearly $\textrm{Cons}(\Reals^n) \subseteq L^2(\Reals^n, \Reals^n)$. The condition that ``$f$ is square-integrable and has a square-integrable gradient'' corresponds to $f$ being in the Sobolev space $W^{1, 2}(\Reals^n)$ (see \citet{jost2012partial}, Chapter 9).
\end{remark}

Denote by $\overline{\textrm{Cons}(\Reals^n)}$ the \textbf{closure} of $\textrm{Cons}(\Reals^n)$, i.e. 
\begin{align}
    \overline{\textrm{Cons}(\Reals^n)} = \{\boldf : \textrm{there exists $(\grad f_k)_{k \geq 0} \subseteq \textrm{Cons}(\Reals^n)$ such that $||\grad f_k - \boldf||^2_2 \rightarrow 0$ as $k \to \infty$}\}
\end{align}
By construction, $\overline{\textrm{Cons}(\Reals^n)}$ is a vector subspace of $L^2(\Reals^n, \Reals^n)$, and is closed, i.e. stable under taking limits.

\begin{theorem}[Complementation in Hilbert Spaces, c.f. \citet{kreyszig1978introduction}, Theorem 3.3-4] \label{thm:l2 proj}
    Let $(\mathcal{H}, \langle\cdot, \cdot\rangle, ||\cdot||)$ be a Hilbert space, and let $Y \subseteq H$ be a closed vector subspace. Then any $x \in \mathcal{H}$ can be written as $x = y + z$, where $y \in Y$ and $z \in Y^{\perp}$.
\end{theorem}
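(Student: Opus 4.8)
The plan is to prove the decomposition by constructing $y$ as the closest point to $x$ in $Y$, and then showing the residual $z = x - y$ is orthogonal to $Y$. This is the classical Hilbert-space projection argument, and it rests on exactly three properties: the parallelogram law, completeness of $\mathcal{H}$, and closedness of $Y$. Only existence of the closest point requires real work; the orthogonality is then a short variational computation.

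First I would set $d = \inf_{w \in Y} ||x - w||$ and choose a minimizing sequence $(y_n) \subseteq Y$ with $||x - y_n|| \to d$. To show $(y_n)$ is Cauchy, I would apply the parallelogram law to the vectors $a = x - y_n$ and $b = x - y_m$, which rearranges to $||y_n - y_m||^2 = 2||x - y_n||^2 + 2||x - y_m||^2 - 4\,||x - \tfrac{1}{2}(y_n + y_m)||^2$. Since $Y$ is a subspace, the midpoint $\tfrac{1}{2}(y_n + y_m)$ lies in $Y$, so the subtracted term is at least $4d^2$; as $||x - y_n||^2, ||x - y_m||^2 \to d^2$, the right-hand side tends to $0$, giving the Cauchy property. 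Completeness of $\mathcal{H}$ then supplies a limit $y$, closedness of $Y$ forces $y \in Y$, and continuity of the norm yields $||x - y|| = d$, so the infimum is attained.

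Next I would set $z = x - y$ and verify $z \in Y^\perp$ using minimality of $y$. For any $w \in Y$ and any scalar $\alpha$, the vector $y + \alpha w$ lies in $Y$, so $d^2 \le ||x - (y + \alpha w)||^2 = ||z||^2 - 2\,\mathrm{Re}(\alpha\langle w, z\rangle) + |\alpha|^2\,||w||^2$. Because $||z||^2 = d^2$, this collapses to $2\,\mathrm{Re}(\alpha\langle w, z\rangle) \le |\alpha|^2\,||w||^2$ for every $\alpha$. Taking $\alpha = \epsilon\,\overline{\langle w, z\rangle}$ with real $\epsilon \to 0^+$ makes the left side equal to $2\epsilon|\langle w, z\rangle|^2$ and the right side equal to $\epsilon^2|\langle w, z\rangle|^2\,||w||^2$, which after dividing by $\epsilon$ and letting $\epsilon \to 0^+$ forces $\langle w, z\rangle = 0$. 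Since $w \in Y$ was arbitrary, $z \in Y^\perp$, and $x = y + z$ is the desired decomposition. (In the real-scalar case relevant to the application $L^2(\Reals^n, \Reals^n)$, the conjugate and $\mathrm{Re}$ are vacuous and the computation simplifies.)

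The main obstacle—indeed the only nontrivial step—is establishing that the infimum $d$ is attained. This is precisely where the Hilbert-space structure is indispensable: the parallelogram law is what converts a norm-minimizing sequence into a Cauchy sequence, a step that genuinely fails in general Banach spaces, while completeness and closedness are what return the limit to $Y$. Once $y$ exists, the orthogonality of $z$ follows mechanically. If uniqueness of the splitting is wanted as well, I would note it is immediate: two decompositions $y + z = y' + z'$ give $y - y' = z' - z \in Y \cap Y^\perp = \{0\}$.
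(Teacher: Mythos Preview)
Your proof is correct and is exactly the classical projection-theorem argument. The paper does not give its own proof of this statement: it merely states the theorem with a citation to Kreyszig (Theorem 3.3-4) and uses it as a black box to derive Corollary~\ref{thm:uniqueness of projection} and Proposition~\ref{thm:orrg}. Your argument is the standard one found in that reference, so there is nothing to compare---you have supplied precisely the proof the paper defers to.
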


\begin{corollary}[Uniqueness of Projection] \label{thm:uniqueness of projection}
    Then the minimum of $v \mapsto ||v - x||$ over $v \in Y$ is attained at the (unique) $y$ given in the theorem, as $||v - x||^2 \geq |\langle v - x, z \rangle| = ||z||^2$, and setting $v = y$ attains this bound.
\end{corollary}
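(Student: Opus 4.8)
The plan is to collapse the minimization to a single nonnegative term using the orthogonal decomposition that Theorem \ref{thm:l2 proj} already hands us. Given $x \in \mathcal{H}$, I would write $x = y + z$ with $y \in Y$ and $z \in Y^\perp$ as in that theorem. For an arbitrary competitor $v \in Y$, I would rewrite the minimand as $v - x = (v - y) - z$. The one fact the whole argument rests on is that $v - y \in Y$ (because $Y$ is a vector subspace), so that $\langle v - y, z\rangle = 0$ by definition of $Y^\perp$.

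With that orthogonality in hand, I would apply the Pythagorean identity: $||v - x||^2 = ||(v-y) - z||^2 = ||v - y||^2 + ||z||^2$, the cross term dropping out exactly because $v - y \perp z$. Since $||v - y||^2 \geq 0$, this gives $||v - x||^2 \geq ||z||^2$ for every $v \in Y$; and because $y - x = -z$ we have $||y - x|| = ||z||$, so the lower bound is achieved at $v = y$. This establishes that $y$ attains the minimum. One can alternatively bound $||v-x||$ below via Cauchy--Schwarz applied to $\langle v - x, z\rangle = -||z||^2$ (which is the route hinted at in the statement), but the Pythagorean decomposition makes the uniqueness step transparent.

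For uniqueness I would read off the equality case directly from the identity above: $||v - x||^2 = ||z||^2$ forces $||v - y||^2 = 0$, and positive-definiteness of the inner-product norm then gives $v = y$. I do not expect any genuine obstacle here, as this is the standard Hilbert-space projection argument; the only points needing care are the bookkeeping that $v - y$ lands in $Y$ so that orthogonality with $z$ applies, and the appeal to definiteness of $||\cdot||$ to upgrade ``a minimizer'' to ``the unique minimizer'' asserted in the corollary.
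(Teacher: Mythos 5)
Your proof is correct and takes essentially the same approach as the paper: the paper never gives a separate proof of this corollary, its entire justification being the inline clause $||v - x||^2 \geq |\langle v - x, z \rangle| = ||z||^2$, which rests on exactly the decomposition $v - x = (v-y) - z$ and the orthogonality $\langle v - y, z\rangle = 0$ that you use. Your Pythagorean-identity writeup is a slightly more complete rendering of the same argument, with the added benefit that the equality case $||v-y||^2 = 0$ makes the uniqueness of the minimizer explicit, a point the paper's terse statement leaves implicit.
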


\begin{proof}[Proof of Proposition \ref{thm:orrg}]
    Firstly note that we assumed $\bsexp_\Theta(\cdot, t) - \bsbase_\phi(\cdot, t)$ is in $L^2(\Reals^n, \Reals^n)$ for each individual $t$.
    
    It follows directly from the above that, as $\overline{\textrm{Cons}(\Reals^n)}$ is a closed subspace of $L^2(\Reals^n, \Reals^n)$, Corollary \ref{thm:uniqueness of projection} applies, and we have that there is a unique minimizer $\bh_t \in \overline{\textrm{Cons}(\Reals^n)}$  in Equation \ref{eq:Rn objective}.

    As $\overline{\textrm{Cons}(\Reals^n)}$ consists of $L^2(\Reals^n, \Reals^n)$ limits of sequences in $\textrm{Cons}(\Reals^n)$, there exists a sequence $(\grad \Phi_k)_{k \geq 0}$ of gradients of $W^{1, 2}$-potentials such that $||\grad \Phi_k - \bh_t||^2_2 \to 0$ as $k \to \infty$. From the definition of convergence, we get that, for any $\varepsilon > 0$, there is $k$ large enough such that
    \begin{align}
        \int_{\Reals^n}|| \nabla_\bx \Phi_k(\bx) - \bh_t(\bx) ||_2^2 \ d\bx < \varepsilon
    \end{align}
    which completes the proof.
\end{proof}

\section{Relative Reward Learning Algorithms} \label{sec apdx: algos}

The below algorithm was used for learning the relative reward functions for experiments with Stable Diffusion (see Section ~\ref{sec:exp-stable_diffusion}).

\resizebox{1\linewidth}{!}{
\begin{algorithm}[H]
    \caption{Relative reward function training with access only to diffusion models}
    \DontPrintSemicolon
    
    \KwInput{Base diffusion model $\bsbase$ and expert diffusion model $\bsexp$.}
    \KwOutput{Relative reward estimator $\rho_\theta$.}

    \Comment{Dataset pre-generation using the diffusion models}
    $\mathcal{D}_1 \gets \emptyset$\;
    $\mathcal{D}_2 \gets \emptyset$\;

    \For{$m \in \{1, 2\}$, $K$ times}
    {
        $\bx_T \sim \mcN(0, I)$\;
        \For{$t \in T-1, ..., 0$}
        {
            $\bx_{t}^{(1)} \gets \textrm{$\bx_{t+1}$ denoised by 1 more step using $\bsbase$}$\;
            $\bx_{t}^{(2)} \gets \textrm{$\bx_{t+1}$ denoised by 1 more step using $\bsexp$}$\;
            $\bx_{t} \gets \bx_{t}^{(m)}$\;
            Add $(t+1, \bx_{t+1}, \bx_{t}^{(1)}, \bx_{t}^{(2)})$ to $\mathcal{D}_m$\;
        }
    }
    \;
    \Comment{Training}
    $\mathcal{D} \gets \mathcal{D}_1 \cup \mathcal{D}_2$\;
    Initialize parameters \theta\;
    \For{$i \in 1 . . . \textrm{n_train_steps}$}
    {   
        \Comment{Using batch size of 1 for clarity}
        Sample $(t+1, \bx_{t+1}, \bx_{t}^{(1)}, \bx_{t}^{(2)})$ from $\mathcal{D}$\;
        
        \Comment{use pre-computed diffusion outputs to compute loss}
        $\hat{L}_{\textrm{RRF}}(\theta) \gets || \nabla_\bx \rho_\theta(\bx_{t+1}, t+1) - (\bx_{t}^{(2)} - \bx_{t}^{(1)}) ||_2^2$ \;
        
        Take an Adam \citep{kingma2014adam} optimization step on $\theta$ according to $\hat{L}_{\textrm{IRL}}(\theta)$
    }
    
\end{algorithm}}

\begin{remark}
    In our experiments with Stable Diffusion, the generation of the datasets $\mathcal{D}_1$ and $\mathcal{D}_2$ uses a \emph{prompt dataset} for generating the images. For each prompt, we generate one image following the base model ($m = 1$), and one following the expert model ($m = 2$). The I2P prompt dataset \citep{schramowski2022safe} contains 4703 prompts.
\end{remark}

\section{Implementation Details, Additional Results and Ablations} \label{sec apdx: experiment details}

\begin{table}[h]
\centering
\caption{Reward achieved by the agent when trained either with the groundtruth reward or with the extractive relative reward (Ours). Note that a policy taking random actions achieves close to zero reward.}
\resizebox{0.5\linewidth}{!}{\begin{tabular}{lcc}
\hline
\toprule
\textbf{Environment} & \textbf{Groundtruth Reward} & \textbf{Relative Reward (Ours)} \\
\midrule
OpenMaze    & \(92.89 \pm 11.79\)  & \(76.45 \pm 19.10\)      \\ 
UMaze       & \(94.94 \pm 8.89\)   & \(74.52 \pm 18.32\)      \\ 
MediumMaze  & \(423.21 \pm 51.30\) & \(276.10 \pm 65.21\)     \\ 
LargeMaze   & \(388.76 \pm 121.39\)& \(267.56 \pm 98.45\)     \\ 
\bottomrule
\end{tabular}}
\label{tab:maze_retraining_retuls}
\end{table}

\begin{table}[h]
\centering
\caption{Performance bounds for ablation of t-stopgrad and guide scales.}
\resizebox{0.5\linewidth}{!}{\begin{tabular}{lccc}
\hline
\toprule
\textbf{Environment} & \textbf{Lower Bound} & \textbf{Mean} & \textbf{Upper Bound} \\
\midrule
HalfCheetah    & \(30.24 \pm 0.39\)  & \(30.62\) & \(31.5 \pm 0.35\)     \\ 
Hopper         & \(19.65 \pm 0.27\)  & \(22.32\) & \(25.03 \pm 0.65\)      \\ 
Walker2d       & \(31.65 \pm 0.92\)  & \(34.35\) & \(38.14 \pm 1.08\)     \\ 
\bottomrule
\end{tabular}}
\label{tab:ablation_study}
\end{table}

\begin{table}[b]
\centering
\caption{Model architectures and dimensions of the relative reward functions.}
\label{tab:architectures}
\resizebox{0.7\linewidth}{!}{
\begin{tabular}{ccccc}
\toprule
\textbf{Environment} & \textbf{Architecture} & \textbf{Dimensions} & \textbf{Diffusion Horizon} & \textbf{Reward Horizon} \\
\midrule
\texttt{maze2d-open-v0} & MLP & (32,) & $256$ & $64$ \\
\texttt{maze2d-umaze-v0} & MLP & (128,64,32) & $256$ & $4$ \\
\texttt{maze2d-medium-v1} & MLP & (128,64,32) & $256$ & $4$ \\
\texttt{maze2d-large-v1} & MLP & (128,64,32) & $256$ & $4$ \\

\midrule
Halfcheetah & UNet Encoder & (64,64,128,128) & $4$ & $4$ \\
Hopper & UNet Encoder & (64,64,128,128) & $32$ & $32$ \\
Walker2D & UNet Encoder & (64,64,128,128) & $32$ & $32$ \\

\midrule
Stable Diffusion & UNet Encoder & (32,64,128,256) & - & - \\
\bottomrule
\end{tabular}
}
\vspace{10pt}
\end{table}

\begin{table}[h]
\centering
\caption{Parameters for the training of the relative reward functions.}
\label{tab:training_params}
\resizebox{0.6\linewidth}{!}{
\begin{tabular}{ccccc}
\toprule
\textbf{Environment} & \textbf{Learning Rate} & \textbf{Batch Size} & \textbf{Diffusion Timesteps} & \textbf{Training Steps} \\
\midrule
Maze & $5 \cdot 10^{-5}$ & $256$ & 100 & 100000\\
Locomotion & $2 \cdot 10^{-4}$ & $128$ & 20 & 50000 \\
Stable Diffusion & $1 \cdot 10^{-4}$ & $64$ & 50 & 6000 \\
\bottomrule
\end{tabular}
}
\vspace{10pt}
\end{table}

\begin{table}[h]
\centering
\caption{Parameters for sampling (rollouts) in Locomotion -- guidance scale $\omega$.}
\label{tab:params_omega}
\resizebox{0.7\linewidth}{!}{
\begin{tabular}{cccc}
\toprule
\textbf{Environment} & \textbf{Discriminator} & \textbf{Reward (Ours, Ablation)} & \textbf{Reward (Ours)} \\
\midrule
Halfcheetah & $\omega=0.5$ & $\omega=0.1$ & $\omega=0.2$ \\
Hopper & $\omega=0.4$ & $\omega=0.1$ & $\omega=0.4$ \\
Walker2d & $\omega=0.2$ & $\omega=0.3$ & $\omega=0.3$ \\
\bottomrule
\end{tabular}
}
\vspace{10pt}
\end{table}

\begin{table}[h]
\centering
\caption{Parameters for sampling (rollouts) in Locomotion  -- \texttt{t_stopgrad}.}
\label{tab:params_tstop}
\resizebox{0.7\linewidth}{!}{
\begin{tabular}{cccc}
\toprule
\textbf{Environment} & \textbf{Discriminator} & \textbf{Reward (Ours, Ablation)} & \textbf{Reward (Ours)} \\
\midrule
Halfcheetah & \texttt{t\_stopgrad = 8} & \texttt{t\_stopgrad = 0} & \texttt{t\_stopgrad = 1} \\
Hopper & \texttt{t\_stopgrad = 10} & \texttt{t\_stopgrad = 2} & \texttt{t\_stopgrad = 1} \\
Walker2d & \texttt{t\_stopgrad = 1} & \texttt{t\_stopgrad = 5} & \texttt{t\_stopgrad = 2} \\
\bottomrule
\end{tabular}
}
\vspace{10pt}
\end{table}

\begin{table}[h]
\centering
\caption{Results for ablation on generalization in Locomotion.}
\label{tab:ablation_locomotion}
\resizebox{0.5\linewidth}{!}{
\begin{tabular}{ccc}
\toprule
\textbf{Environment} & \textbf{Unsteered} & \textbf{Reward (Ours, Ablation)} \\
\midrule
Halfcheetah & $31.74 \pm 0.37$  & $33.06 \pm 0.31$ \\
Hopper & $22.95 \pm 0.81$  & $25.03 \pm 0.79$ \\
Walker2d & $30.44 \pm 1.01$  & $42.40 \pm 1.07$ \\
\midrule
Mean & 28.38 & 33.50 \\
\bottomrule
\end{tabular}
}
\vspace{10pt}
\end{table}

\subsection{Implementation Details}\label{sec:app_implementation_details}

\subsubsection{Diffusion Models.}

Following \citep{diffuser} Section 3, we use the DDPM framework of Ho et al. 2022 with a cosine beta schedule. We do not use preconditioning in the sense of Section 5 of \citep{karraselucidating}. However, we do clip the the denoised samples during sampling, and apply scaling to trajectories, as discussed below in Appendix \ref{normalization}.

\subsubsection{Relative Reward Function Architectures.} The architectures and dimensions for all experiments are given in Table~\ref{tab:architectures}. In the Locomotion environments and in Stable Diffusion, we parameterize the relative reward functions using the encoder of a UNet, following the architecture of~\citet{diffuser} for parameterizing their value function networks. In the Maze environments, we use an MLP taking in a sequence of $h$ state-action pairs. In the following, the \emph{dimensions} refer to the number of channels after each downsampling step in the case of UNet architectures, and to layer dimensions in the case of MLP architectures. The \emph{horizon} refers to the number of state-action pairs input to the reward function network. When it does not match the diffusion horizon $H$, we split trajectories into $H/h$ segments, feed each into the reward network, and average the results.

\subsubsection{Relative Reward Function Training Hyperparameters.} 
In Table~\ref{tab:training_params} we indicate the learning rate and batch size used for training the reward functions, as well as the number of denoising timesteps of the diffusion models they are trained against. We report the number of training steps for the models used to generate the plots and numerical results in the main paper. We use Adam~\citep{NEURIPS2019_9015} as an optimizer, without weight decay. The Diffusion models use ancestral sampling as in~\citep{diffuser, ho2020denoising}, with a cosine noise schedule \citep{nichol2021improved}.

\subsubsection{Conditional Generation Hyperparameters.} In Locomotion environments, we find that the performance of the steered model is affected by the guidance scale $\omega$ and by \texttt{t_stopgrad}, a hyperparameter specifying the step after which we \emph{stop} adding reward gradients to the diffusion model sample. For example, if \texttt{t_stopgrad = 5}, we only steer the generation up to timestep 5, and for steps 4 down to 0 no more steering is used. 

In Tables~\ref{tab:params_omega} and~\ref{tab:params_tstop} we indicate the respective values used for the results reported in Table~\ref{tab:results_locomotion} (see {\it Discriminator} and {\it Reward (Ours)}), as well as for the results presented in the later presented ablation study in Appendix ~\ref{sec apdx: ablation locomotion} (see {\it Reward (Ours, Ablation)}).

\subsubsection{Using Estimates of $\mathbb{E}[\bx_{t-1}|\bx_0, \bx_t]$ for Gradient Alignment.} Diffusion models can be trained to output score estimates $\bs_\theta(x_t, t) \approx \grad_\bx \textrm{log }p_t(\bx_t)$, but also estimates of a denoised sample:  \mbox{$\bmu_{\theta}(\bx_t, t) = \frac{1}{\sqrt{\alpha_t}}\left( \bx_t - \frac{\beta_t}{\sqrt{1-\bar\alpha_t}} \bs_\theta(\bx_t, t) \right) \approx \mathbb{E}[\bx_{t-1}|\bx_0, \bx_t]$}. 

For two diffusion models $\bsbase_\phi$ and $\bsexp_\Theta$, we have that  $\bmu_{\phi}(\bx_t, t) - \bmu_{\Theta}(\bx_t, t) \propto \bsexp_\Theta(\bx_t, t) - \bsbase_\phi(\bx_t, t)$ (notice that the sign gets flipped). However, we find that aligning the reward function gradients to the difference in means $\bmu_{\phi}(\bx_t, t) - \bmu_{\Theta}(\bx_t, t)$ (instead of the difference in scores) leads to slightly better performance. Hence, in practice, we use the following reward function:
\begin{align}
    L_{\textrm{RRF}}(\theta) = \E_{t \sim \mathcal{U}[0, T], \bx_t \sim p_t}\left[|| \nabla_\bx \rho_\theta(\bx_{t}, t) - (\bmu_{\phi}(\bx_t, t) - \bmu_{\Theta}(\bx_t, t))||_2^2\right]
\end{align}
Notice that, since during sampling we multiply the outputs of $\rho$ by a guidance scale $\omega > 0$ of our choice, this reward function is equivalent to the one in the main paper, up to a scale factor.

\subsubsection{Trajectory Normalization.} \label{normalization}

Following the implementation of \citet{diffuser}, we use min-max normalization for trajectories in Maze and Locomotion environments. The minima and maxima are computed for each coordinate of states and actions, across all transitions in the dataset. Importantly, they are computed without any noise being added to the transitions. For our experiments with Stable Diffusion, we use standard scaling (i.e. making the data be centered at \textbf{0} and have variance 1 on each coordinate). Since in this case a dataset is pre-generated containing all intermediate diffusion timesteps, the means and standard deviations are computed separately for each diffusion timestep, and for each coordinate of the CLIP latents.

\subsubsection{Reward Function Horizon vs. Diffuser Horizon.} We consider different ways of picking the number of timesteps fed into our network for reward prediction. Note that the horizon of the reward model network must be compatible with the length of the Diffuser trajectories (256). To account for this, we choose horizons $h$ that are powers of two and split a given diffusor trajectory of length 256 into $256/h$ sub-trajectories, for which we compute the reward separately. We then average the results of all windows (instead of summing them, so that the final numerical value does not scale inversely with the chosen horizon). The reward horizon $h$ used for different experiments is indicated in Table~\ref{tab:architectures}.

\subsubsection{Visualization Details.}
To obtain visualizations of the reward function that are intuitively interpretable, we modified the D4RL~\citep{fu2020d4rl} dataset in Maze2D to use start and goal positions uniformly distributed in space, as opposed to only being supported at integer coordinates. Note that we found that this modification had no influence on performance, but was only made to achieve visually interpretable reward functions (as they are shown in Figure~\ref{fig:maze2d heatmaps}).

\subsubsection{Quantitative Evaluation in Maze2D.} We quantitatively evaluated the learned reward functions in Maze2D by comparing the position of the global maximum of the learned reward function to the true goal position. We first smoothed the reward functions slightly to remove artifacts and verified whether the position of the global reward maximum was within 1 grid cell of the correct goal position. We allow for this margin as in the trajectory generation procedure the goal position is not given as an exact position but by a circle with a diameter of 50\% of a cell's side length.

We train reward functions for 8 distinct goal positions in each of the four maze configurations and for 5 seeds each. The confidence intervals are computed as $\hat{p} \pm 1.96\sqrt{\frac{\hat{p} \ (1 - \hat{p})}{n}}$, where $\hat{p}$ is the fraction of correct goal predictions, and $n$ is the total number of predictions (e.g. for the total accuracy estimate, $n = 4 \textrm{ mazes} \cdot 8 \textrm{ goals} \cdot 5 \textrm{ seeds} = 160$).

\subsubsection{Running Time.} The Diffusers \citep{diffuser} take approximately 24 hours to train for 1 million steps on an NVIDIA P40 GPU. Reward functions took around 2 hours to train for 100000 steps for Maze environments, and around 4.5 hours for 50000 steps in Locomotion environments, also on an NVIDIA Tesla P40 GPU. For the Stable Diffusion experiment, it took around 50 minutes to run 6000 training steps on an NVIDIA Tesla M40 GPU. For Locomotion environment evaluations, we ran 512 independent rollouts in parallel for each steered model. Running all 512 of them took around 1 hour, also on an NVIDIA Tesla P40.

\subsection{Additional Results.}

\subsubsection{Retraining of Policies in Maze2D with the Extracted Reward Function.}\label{sec:app_retraining_maze2d}
 We conducted additional experiments that show that using our extracted reward function, agents can be trained from scratch in Maze2D. 
 We observe in table~\ref{tab:maze_retraining_retuls} that agents trained this way with PPO~\citep{schulman2017proximal} achieve 73.72\% of the rewards compared to those trained with the ground-truth reward function, underscoring the robustness of our extracted function; not that a randomly acting policy achieves close to zero performance.

\subsection{Ablations} \label{sec apdx: further heatmaps}
\subsubsection{Ablation study on dataset size in Maze2D}
The main experiments in Maze2D were conducted with datasets of $10$ million transitions. To evaluate the sensitivity of our method to dataset size, we conducted a small ablation study of 24 configurations with datasets that contained only $10$ thousand transitions, hence on the order of tens of trajectories. 
The accuracy in this ablation was at 75.0\% (as compared to 78.12\% in the main experiments). Examples of visual results for the learned reward functions are shown in Figur~\ref{fig:maze2d ablation}. This ablation is of high relevance, as it indicates that our method can achieve good performance also from little data.

\begin{figure}
    \centering
    \includegraphics[width=125mm]{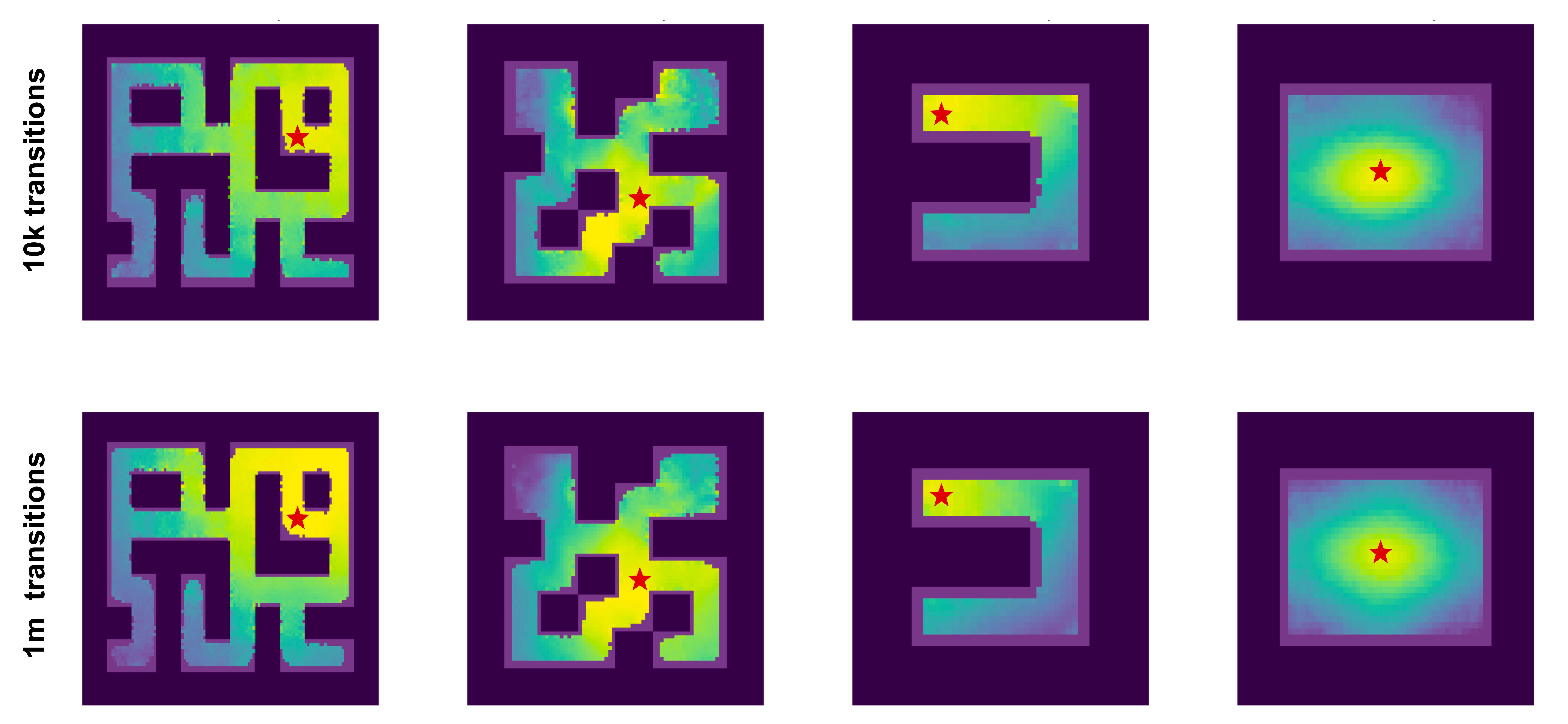}
    \caption{Heatmaps for rewards learned with 10 thousand and 1 million transition datasets. While there is slight performance degradation, the maxima of the reward still largely correspond to the ground-truth goals.}
    \label{fig:maze2d ablation}
\end{figure}

\subsubsection{Ablation Study on Generalization in Locomotion} \label{sec apdx: ablation locomotion}
For the main steering experiments in Locomotion the reward functions are trained on the same base diffusion model that is then steered after. We conducted an ablation study to investigate whether the learned reward function generalizes to {\it other base} models, i.e. yields significant performance increases when used to steer a base model that was not part of the training process. We trained additional base models with new seeds and steered these base diffusion models with the previously learned reward function. We report results for this ablation in Table~\ref{tab:ablation_locomotion}. We found that the relative increase in performance is similar to that reported in the main results in Table~\ref{tab:results_locomotion} and therefore conclude that our learned reward function generalizes to new base diffusion models.

\subsubsection{Ablation Study on Hyerparameters in Locomotion.}\label{sec:app_abl_hyper_loco}
We conducted additional ablations with respect to t-stopgrad and the guide scale in the Locomotion environments, for a decreased and an increased value each (hence four additional experiments per environment).
We observe in Table~\ref{tab:ablation_study} that results are stable across different parameters.

\raggedbottom

\section{Additional Results for Maze2D} \label{app discrimination maze}

\begin{center}
    \includegraphics[width=100mm]{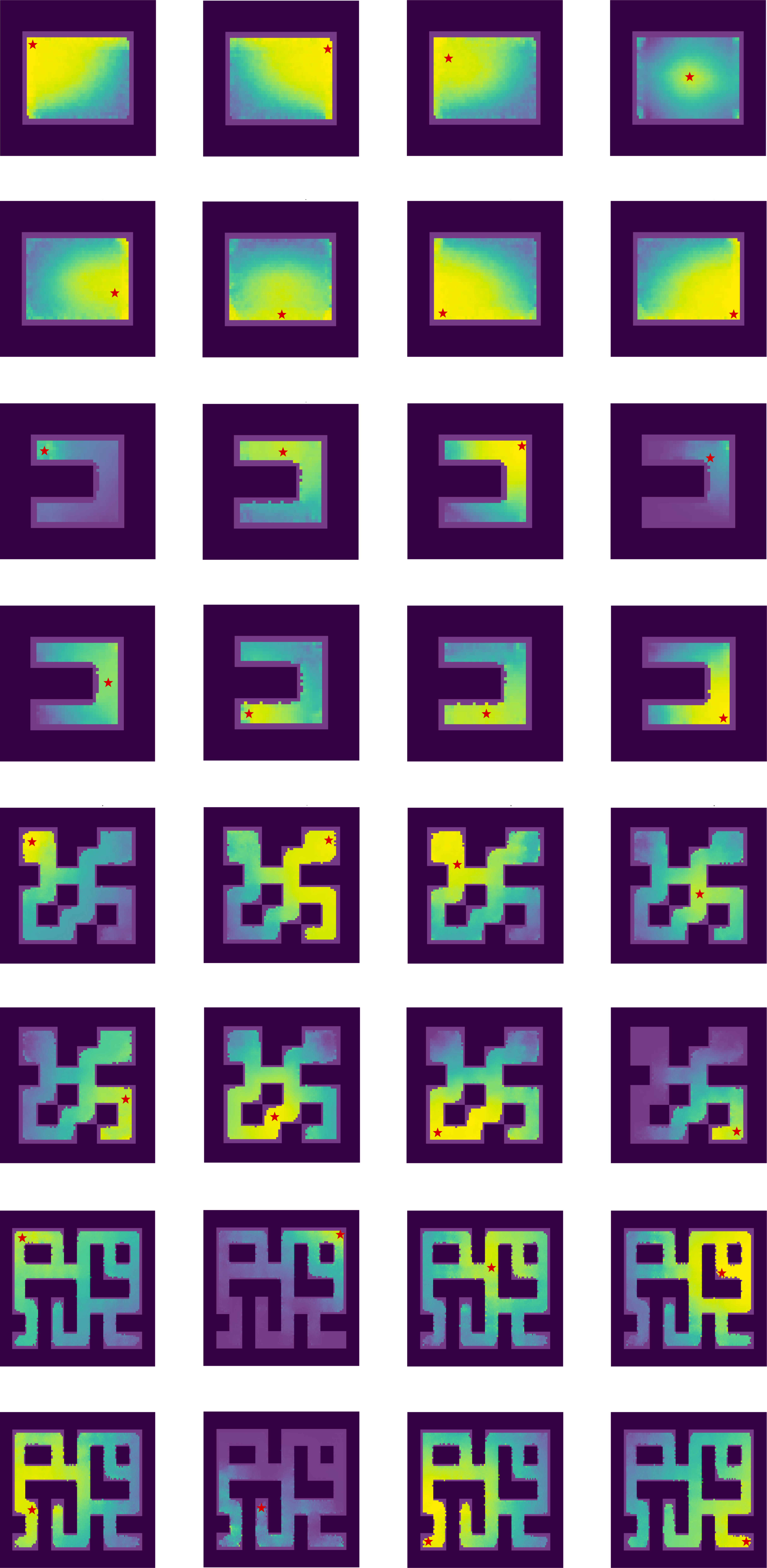}
    \label{fig:my_label}
\end{center}

\end{appendices}

\appendix

\end{document}